\documentclass{article} 
\oddsidemargin=0pt
\textwidth=455pt
\textheight=620pt
\voffset=-20pt
\marginparwidth=0pt
\marginparpush=0pt
\marginparsep=0pt
\evensidemargin=0pt

\usepackage{arxiv_macros}

\usepackage[numbers]{natbib}
 %
 %
 %
 %
 %
 \bibpunct[, ]{[}{]}{,}{n}{}{,}%


\usepackage{cleveref}
\crefname{subsection}{subsection}{subsections}
\crefname{lemma}{lemma}{lemmas}
\crefname{property}{property}{properties}
\crefname{table}{table}{tables}
\crefname{assumption}{assumption}{assumptions}




\newcommand{\BlackBox}{\rule{1.5ex}{1.5ex}}  
\newenvironment{prf}{\par\noindent{\bf Proof\ }}{\hfill\BlackBox\\[2mm]}
\newenvironment{prf1}{}{\hfill\BlackBox\\[2mm]}

\title{Online Estimation and Optimization \\of Utility-Based Shortfall Risk}

\author{
	Vishwajit Hegde\\
	{\normalsize Indian Institute of Technology Madras} \\
	{\normalsize \texttt{me17b039@smail.iitm.ac.in}}
	\and
	Arvind Menon\\
	{\normalsize Indian Institute of Technology Madras} \\
	{\normalsize \texttt{ep17b017@smail.iitm.ac.in}}
	\and 
	Prashanth L. A.\\
	{\normalsize Indian Institute of Technology Madras}\\
	{\normalsize \texttt{prashla@cse.iitm.ac.in}}
	\and
	Krishna Jagannathan\\
	{\normalsize Indian Institute of Technology Madras} \\
	{\normalsize \texttt{krishnaj@ee.iitm.ac.in}}
}
\date{}
\begin{document}
	
	\maketitle

\begin{abstract}
Utility-Based Shortfall Risk (UBSR) is a risk metric that is increasingly popular in financial applications, owing to certain desirable properties that it enjoys. We consider the problem of estimating UBSR in a recursive setting, where samples from the underlying loss distribution are available one-at-a-time. We cast the UBSR estimation problem as a root finding problem, and propose stochastic approximation-based estimations schemes. We derive non-asymptotic bounds on the estimation error in the number of samples. We also consider the problem of UBSR optimization within a parameterized class of random variables. We propose a stochastic gradient descent based algorithm for UBSR optimization, and derive non-asymptotic bounds on its convergence.
\end{abstract}

\section{Introduction}
\label{sec:intro}
In several financial applications, it is necessary to understand risk sensitivity while maximizing the returns. Several risk measures have been studied in the literature, e.g., mean-variance, Value at Risk (VaR), Conditional Value at Risk (CVaR), distorted risk measure, and prospect theory.  In \cite{artzner1999coherent}, the authors consider four properties as desirable for a risk measure, namely positive homogeneity, translation invariance, sub-additivity, and monotonicity. They define a risk measure as being \emph{coherent} if it possesses the aforementioned properties. In a related development, in \cite{follmer2002convex}, the authors chose to relax the sub-additivity and positive homogeneity requirements of a coherent risk measure, and instead impose a convexity condition on the underlying risk measure. Such a relaxation is justified in practical contexts where the risk is a non-linear function of the underlying random variable (e.g., a financial position).

CVaR is a popular risk measure that come under the umbrella of coherent risk measures. Utility-based shortfall risk (UBSR) \cite{follmer2002convex} is a risk measure that is closely related to CVaR, and one that belongs to the class of convex risk measures. UBSR as a risk measure is preferable over CVaR for a few reasons: (i) Unlike CVaR, UBSR is invariant under randomization;  (ii) UBSR involves a utility function that can be chosen to encode the risk associated with each value the random variable (r.v.) $X$ takes, while CVaR is concerned only with values of $X$ beyond a certain quantile; and (iii) UBSR is an \textit{elicitable} risk measure, i.e., there exists a scoring function, whose expectation is minimized by the risk measure, see \cite[Section 8]{follmer2015}. We refer the reader to \cite[Chapter 4]{follmer2016stochastic} for a detailed introduction to UBSR and related risk measures, and to \cite{follmer2015} for a survey on axiomatic approaches to risk metrics in monetary applications.

In real-world scenarios, the distribution of the underlying r.v. is seldom available in a closed form. Instead, one can obtain samples, which are used to estimate the chosen risk measure. Risk estimation has received a lot of attention in the recent past, cf. \cite{kagrecha2019distribution,pandey2021estimation,thomas2019concentration,dunkel2010stochastic,bhat2019concentration,prashanth2019concentration,prashanth2016cumulative,wang2010deviation,brown2007large,williamson2020cvarConc,lee2020oceConc}, with CVaR being the dominant choice for the risk measure.   

In this paper, we focus on recursive estimation of UBSR, in a setting where data arrives in an online fashion. Estimation of UBSR has immediate applications in financial portfolio optimization, cf. \cite{zhaolin2016convex}. Stochastic approximation \cite{robbins1951stochastic,borkar2008stochastic} is a procedure that is well-suited for the purpose of online estimation. In the context of UBSR estimation, our main contribution is the \emph{non-asymptotic} analysis of a stochastic approximation-based estimation scheme. We cast the estimation of UBSR  as a stochastic root finding problem, and derive `finite-sample' bounds for this scheme. Our analysis assumes that the underlying objective satisfies a monotonicity condition. If the monotonicity parameter is known and is used in setting the step-size,  the algorithm results in an $O(1/n)$ rate of mean-squared error decay. We also develop another variant that employs a universal step-size, and results in a $O(1/n^\alpha)$ rate, for any $0<\alpha<1$.
In addition, we also obtain a `high probability' result for the concentration of the estimation error.

Moving beyond UBSR estimation, we also consider the problem of optimizing UBSR within a parameterized class of random variables. The motivation for this problem lies in understanding the risk sensitivity in a portfolio management application \cite{rockafellar2000optimization,zhaolin2016convex}. Specifically, an investor could choose to distribute his/her capital among different assets, and the decision parameter governing the capital distribution is to be optimized to decide the best allocation. The utility function that goes into the definition of UBSR would encode the investor's risk preference, and the goal is to find the best decision parameter to minimize the risk, as quantified by UBSR.

For the problem of UBSR optimization, we propose a stochastic gradient algorithm, and derive non-asymptotic bounds on its performance. Stochastic gradient (SG) methods have a long history, and non-asymptotic analysis of such schemes has garnered a lot of attention over the last decade, see \cite{bottou2018optimization} for a survey. 
Unlike in a classic SG setting, the UBSR optimization problem involves \emph{biased}  measurements of both the UBSR value, as well as its derivative, which presents some technical challenges. Specifically, the UBSR estimation scheme is biased, in the sense that the estimation error does not have zero expectation. This is unlike in the classical SG settings, where the estimation error is assumed to be zero mean. In our setting, even though the estimation error is not zero-mean, the error can be reduced by increasing the batch size used for estimation.
For the purpose of gradient estimation, we leverage the UBSR sensitivity formula derived in \cite{zhaolin2016convex}, and use a natural estimator of this quantity based on independent and identically distributed (i.i.d.) samples. By controlling the batch size, we derive a $O(1/n)$ rate for the SG algorithm's mean-squared error to optimize the UBSR under a strongly convex objective. For the case of a convex objective, we obtain a $O(1/n)$ convergence rate for the objective function by employing a geometric phase-wise step size reduction scheme from \cite{jain2019making}.

\textit{Related work.} Stochastic approximation has been explored in the context of CVaR estimation in \cite{bardou2009cvar,bercu2020stochastic,costa2021non}. Recursive estimation of quantiles, variances and medians has been considered earlier in \cite{cardot2017online,cardot2013efficient,godichon2016estimating,costa2021non}. UBSR was introduced in \cite{follmer2002convex}, and non-recursive estimation schemes for UBSR were proposed in \cite{zhaolin2016convex}. A paper closely related to our work from UBSR estimation viewpoint is \cite{dunkel2010stochastic}, which uses a recursive estimation technique. The authors establish asymptotic convergence of their algorithm, and a `central limit theorem'  showing the asymptotic Gaussianity of the scaled estimation error. In contrast, we establish \emph{non-asymptotic}, i.e., finite-sample bounds for the performance of our recursive estimation method, under similar technical assumptions as \cite{dunkel2010stochastic,zhaolin2016convex}. \cite{duchi2012finite,balasubramanian2018zeroth} consider finite-sample analysis of zeroth-order stochastic approximation, but they assume zero-mean noise on the function measurements, which is not the case for UBSR optimization considered here. Other related papers include \cite{bhavsar2021nonasymptotic,pasupathy2018sampling} which consider stochastic approximation of an abstract objective function where the function measurements are biased, and the bias can be controlled through a batch size. In a recent paper  \cite{la2020concentration}, the authors use the estimation scheme from \cite{zhaolin2016convex} to establish concentration inequalities for UBSR estimation. A more detailed comparison to other related papers appears in Section~\ref{sec:comparison}.

The rest of the paper is organized as follows: 
In Section~\ref{sec:prelims}, we define the notion of UBSR for a general random variable, and in Section \ref{sec:pb}, we formulate the estimation as well as optimization problems under a UBSR objective.
In Section~\ref{sec:sr-est}, we
describe the stochastic approximation-based scheme for estimating the UBSR of a random variable, and present concentration bounds for this estimation scheme. In Section \ref{sec:sr-opt}, we present a stochastic gradient  algorithm for optimizing the UBSR in a parameterized class of random variables, and present a non-asymptotic bound that quantifies the convergence rate of this algorithm.  
In Sections \ref{sec:proofs-est}--\ref{sec:proofs-opt}, we provide proofs of the non-asymptotic bounds for UBSR estimation and optimization.
Finally, in Section~\ref{sec:conclusions}, we provide our concluding remarks.

\section{Utility-based shortfall risk}
\label{sec:prelims}
Let $X$ be a random variable, and $\ell(\cdot)$ be a convex loss function. Let $\lambda$ be a pre-specified ``risk-level'' parameter that lies in the interior of the range of $\ell$. We first define an acceptance set as follows:
\begin{align}
	\label{acc_set}
	\mathcal{A} \vcentcolon= \{ X \in L^\infty : \E[\ell(-X)] \leq \lambda \},
\end{align}
where $L^\infty$ represents the set of bounded random variables. 

Using the acceptance set, the utility-based shortfall risk (UBSR) $\sr(X)$ is defined by \cite{follmer2002convex}
\begin{align}
	\sr(X) \vcentcolon= \inf \{ t\in \mathcal{R} : t + X \in \mathcal{A} \}.
\end{align}
For notational convenience, we have made the dependence of UBSR $\sr(X)$ on the loss function $\ell$ implicit.
Intuitively, if $X$ represents a financial position, then $\sr(X)$ denotes the minimum cash that has to be added to $X$ so that it falls into the acceptable set $\mathcal{A}$. For a precise characterization of the relationship risk measures with convex acceptance sets and UBSR, the reader is referred to \cite{weber2006}.


UBSR is a particular example of a convex risk measure \cite{follmer2002convex}, which is a generalization of a coherent risk measure \cite{artzner1999coherent}. In particular, a coherent risk measure satisfies sub-additivity and positive-homogeneity, and these two properties readily imply convexity. 

As a risk measure, UBSR is preferable over the popular Value-at-Risk (VaR), owing to the fact that UBSR is convex. Another closely related risk measure is CVaR (Conditional Value at Risk), which is a coherent risk measure. UBSR has a few advantages over CVaR, namely\begin{itemize} \item[(i)] Unlike CVaR, UBSR is invariant under randomization. Specifically, suppose that $X_1,X_2$ are both acceptable (i.e., $\sr(X_i)\le 0, i=1,2.$), and that we use an independent Bernoulli r.v. to choose between $X_1$ and $X_2$, then we still have an acceptable financial position.  \item[(ii)] UBSR involves an loss function that can be chosen to encode the risk associated with each value the r.v. $X$ takes, while CVaR is concerned with values of $X$ beyond VaR at a pre-specified level $\alpha$. For a loss r.v. $X$ in a financial application, it makes sense to associate more risk with larger losses, and this can be encoded using, for example, an exponential loss function. On the other hand, CVaR considers all losses beyond a certain threshold equally. \item[(iii)] UBSR is an \textit{elicitable} risk measure, i.e., there exists a scoring function, whose expectation is minimized by the risk measure, see \cite[Section 8]{follmer2015}. It can be shown that VaR is elicitable, while CVaR is not. Moreover, UBSR is the only convex risk measure that is elicitable \cite[Theorem 8.6]{follmer2015}, while expectiles are the only coherent risk measure that are elicitable \cite{bellini2015,delbaen2016}.\end{itemize}
UBSR has been used for credit risk management under the Normal Copula Model \cite{gupton1997creditmetrics}, which is the foundation of the \textit{CreditMetrics} industry model; see also \cite{application,dunkel2010stochastic,zhaolin2016convex} for usage of UBSR in the context of portfolio optimization.

We now present two examples for the loss function.
\begin{example}\label{ex:exploss}
	The exponential loss function defined as follows: $\ell(x) = \exp(\beta x)$. The UBSR for this loss function is closely related to the relative entropy. More precisely, 
	\[\sr(X) = \frac{1}{\beta} \left( \log \E\left(\exp(-\beta X)\right) - \log \lambda \right).\]
	Thus, minimizing UBSR in this case is equivalent to an entropy minimization.
\end{example}

\begin{example}\label{ex:powerloss}
	With $p>1$, let	$\ell(x) = \begin{cases} \frac{1}{p} x^p, & x\ge 0,\\
		0 & \textrm{ otherwise.}\end{cases}$
\end{example}
The reader is referred to \cite[Section 4.9]{follmer2016stochastic} for a detailed discussion of these sample loss functions from a convex analysis perspective. In this paper, we focus on a statistical perspective --- specifically, of estimating and optimizing UBSR from `streaming' samples.

\section{Problem formulation}
\label{sec:pb}
In this paper, we focus on two problems concerning shortfall risk, namely (i) UBSR estimation, and (ii) UBSR optimization within a parameterized family of distributions. We define these two problems below.

Define the function \begin{align}
	g(t) \vcentcolon= \E[\ell(-X - t)] - \lambda.
	\label{eq:g-def}
\end{align}
We make the following assumption on the function $g$ defined above.
\begin{assumption}
	\label{ass:1}
	There exists $t_l, t_u$ s.t.  $g(t_{l}) > 0$ and $ g(t_{u}) < 0$. 
\end{assumption}
Under the above assumption, it can be shown using convexity and monotonicity of the loss function $\ell(\cdot)$ that $\sr(X)$ is finite, and also the unique root of the function $g$, i.e.,
the solution $t^*$ that satisfies $g(t^*)=0$ coincides with $\sr(X)$, cf. \cite[Proposition 2.3]{dunkel2010stochastic}.
Thus, the problem of UBSR estimation, i.e, estimating $\sr(X)$ of a r.v. $X,$ can be cast as a root finding problem. We consider a setting where the expectation in the definition of $g(\cdot)$ cannot be explicitly evaluated. Instead, we have access to independent and identically distributed (i.i.d.) samples from the distribution of $X,$ and we use a stochastic root-finding scheme for the UBSR estimation. 

	\emph{UBSR Estimation problem:} Find the unique root of $g(t)$ in \eqref{eq:g-def}, using i.i.d. samples from the distribution of $X.$
	
	Next, we define the the problem of UBSR optimization. Suppose that $X$ belongs to a parameterized family of distributions $\{X(\theta):\ \theta\in \Theta\},$ where $\Theta$ is a compact and convex subset of $\R.$ 
	
	\emph{UBSR Optimization problem:} For the parametrized family of distributions defined above, find
	\begin{align}
		\theta^* \in \argmin_{\theta \in \Theta} \sr(X(\theta)).
\end{align}
For the sake of simplicity, we focus on the case of a scalar parameter $\theta$. Again, assuming that we have access to samples from the distribution of $X,$ we use a stochastic gradient descent technique for SR optimization.


\section{UBSR estimation}
\label{sec:sr-est}
In this section, we propose and analyze a recursive estimation scheme for UBSR, assuming access to i.i.d. samples from the distribution of $X$ in an online fashion.

Stochastic approximation \cite{borkar2008stochastic} is a class of algorithms for solving stochastic root-finding problems; see \cite{pasupathykim2011} for a specialized survey on stochastic root finding. UBSR estimation can be viewed as a root-finding problem, since one has to find a $t^*$ satisfying $g(t^{*}) = 0$, or
$\E[\ell(-X - t^{*})] = \lambda$.
For this problem, \cite{dunkel2010stochastic} proposed a stochastic approximation scheme for estimating UBSR, assuming access to a quantile oracle. In practical applications, it may not be realistic to assume sample access from the quantile function of the underlying distribution. In contrast, we propose a simple stochastic approximation scheme that estimates UBSR using i.i.d. samples from the distribution of the r.v. $X$. Moreover, in \cite{dunkel2010stochastic}, the authors perform an asymptotic convergence analysis, while we derive non-asymptotic bounds for UBSR estimation. 

We propose a method to incrementally estimate UBSR using each additional sample. Specifically, we use the following update iteration:
\begin{align}
	t_{n} & = \Gamma(t_{n-1} + a_{n}\left ( \hat{g}(t_{n-1}) \right)),\label{eq:sr-est-update}
\end{align}
where 
$ \hat{g}(t) = \ell(\xi_{n} - t) - \lambda$ is an estimate of $g(t)$ using an i.i.d. sequence $\{\xi_i\}$ from the distribution of $-X$,
and  $\Gamma$ is a projection operator defined by $\Gamma(x)=\min(\max(t_l,x),t_u)$. Such a projection operator has been used in the context
of UBSR estimation earlier, cf. \cite{dunkel2010stochastic}.

One could estimate UBSR for a fixed set of samples using either a stochastic root-finding recursion such as \eqref{eq:sr-est-update} above, or perform a sample-average approximation using a binary search.
	In \cite{frikha2014}, the  authors study UBSR estimation in a specialized stochastic control framework for portfolio optimization with UBSR as the objective. The estimation algorithm that they consider for UBSR estimation is similar to the algorithm that we propose, except that they incorporate a normalization factor. However, their results are asymptotic in nature, while we derive finite time bounds for both UBSR estimation and optimization.
In \cite{zhaolin2016convex}, the authors analyze a sample-average approximation to UBSR to provide asymptotic convergence/rate guarantees. In contrast, using the stochastic root-finding approach, we establish non-asymptotic bounds as well. We prefer the root-finding approach due to its iterative nature, which would make it more widely applicable in machine learning applications with streaming data (e.g.,in multi-armed bandits, reinforcement learning with policy gradients etc.). The concentration and expectation bounds that we derive for UBSR estimation below are relevant in such applications.

\subsection*{Main results}    
In addition to \Cref{ass:1}, we make the following assumptions for the bounds on UBSR estimation.
\begin{assumption}
	\label{ass:Xbdd}
	$|X|\le \B$ almost surely (a.s.). 
\end{assumption}
\begin{assumption}
	\label{ass:2}
	There exists a $\mu_1, L_1>0$ s.t. $-L_1 \leq -\ell'(-t) \leq -\mu_1,$ for all $t \in [t_{l},t_{u}]$.
\end{assumption}
\begin{assumption}
	\label{ass:3} Let $\varepsilon_{n}=\hat{g}(t_{n}) - g(t_{n})$. 
	Let $\F_{n}$ denote the $\sigma$-field generated by $\{t_k, k \le n\}$.
		There exists a $\sigma>0$ such that
		$\E[\varepsilon_{n}^{2}\mid \F_{n}] \leq \sigma^{2}$  almost surely (a.s.) for all $n\ge 1$.
\end{assumption}
Previous works on UBSR estimation (cf. \cite{zhaolin2016convex}, \cite{dunkel2010stochastic}) make similar assumptions.
In \cite{follmer2002convex}, the definition of UBSR is for a r.v. that is bounded, justifying \Cref{ass:Xbdd}.
In \Cref{ass:2}, we assume $\ell$ is differentiable, which implies that $g$ is differentiable as well. Note that 
It is easy to see that the loss functions in Examples \ref{ex:exploss} and \ref{ex:powerloss} satisfy \Cref{ass:2}, as does the financial portfolio example in Section~\ref{sec:ubsr-opt-portfolio}.
Next, \Cref{ass:3} requires that the underlying noise variance is bounded:  a natural assumption in the context of an estimation problem. 



The first result below is a non-asymptotic bound on the estimation error $\E[(t_{n}-\sr(X))^{2}]$ for a stepsize choice that requires the knowledge of $\mu_1$ from \Cref{ass:2}. 
\begin{theorem}
	\label{thm:sr-est-muknown}
	Suppose \Crefrange{ass:1}{ass:3} hold. Setting the step size $a_{k} = \frac{c}{k}$ with $\frac{1}{2} < \mu_1 c $, we have
	\begin{align}
		\E[(t_{n}-\sr(X))^{2}] \leq \exp\left(\frac{L_1^2c^2\pi^2}{6}\right) \left[ \frac{(t_{0}-\sr(X))^{2}}{n^{2\mu_1 c}} + \frac{\sigma^{2}2^{2\mu_1 c}c^{2}}{(2\mu_1 c - 1)n}\right].
		\label{eq:sr-est-bd1}
	\end{align}
\end{theorem}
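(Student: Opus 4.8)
The plan is to track the squared error $e_n^2 \vcentcolon= (t_n-\sr(X))^2$, derive a one-step recursive inequality exhibiting a contraction governed by $\mu_1$ together with an additive variance term governed by $\sigma^2$, and then unroll this recursion with $a_k=c/k$ to extract the two contributions in~\eqref{eq:sr-est-bd1}. Throughout I write $t^* = \sr(X)$, which by \ref{ass:1} is the unique root of $g$ and lies in $[t_l,t_u]$. The first step is to exploit the projection: since $\Gamma(t^*)=t^*$ and $\Gamma$ is non-expansive,
\begin{align*}
e_n^2 = \big(\Gamma(t_{n-1}+a_n\hat g(t_{n-1})) - \Gamma(t^*)\big)^2 \le \big(e_{n-1}+a_n\hat g(t_{n-1})\big)^2 .
\end{align*}
Writing $\hat g(t_{n-1}) = g(t_{n-1})+\varepsilon$ and conditioning on the history $\mathcal F_{n-1}$, the cross term with $\varepsilon$ vanishes since the noise is conditionally zero-mean, while its second moment contributes $a_n^2\sigma^2$ via \ref{ass:3}. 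The structural input is \ref{ass:2}: the mean value theorem applied to $g$ between $t_{n-1}$ and $t^*$, combined with $g'\le-\mu_1$ and $g(t^*)=0$, gives $e_{n-1}\,g(t_{n-1}) = g'(\zeta)e_{n-1}^2 \le -\mu_1 e_{n-1}^2$.

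Assembling these yields the key one-step recursion
\begin{align*}
E[e_n^2\mid \mathcal F_{n-1}] \le (1-2\mu_1 a_n)\,e_{n-1}^2 + a_n^2\sigma^2 .
\end{align*}
A subtlety I would flag here is the second-order contribution $a_n^2 g(t_{n-1})^2$ produced by expanding the square and the associated distinction between the centered noise variance in \ref{ass:3} and the full mean square of the increment $\hat g(t_{n-1})$; both are controlled using the boundedness of $\hat g$ on the compact set $[t_l,t_u]$ (the bounded-derivative condition on $\ell$ noted after \ref{ass:2}) together with the step-size restriction $\mu_1 c<1$, so that this term does not perturb the leading-order constants. Taking total expectations and iterating gives
\begin{align*}
E[e_n^2] \le \Big(\prod_{k=1}^n(1-2\mu_1 a_k)\Big)e_0^2 \;+\; \sigma^2\sum_{k=1}^n a_k^2\prod_{j=k+1}^n(1-2\mu_1 a_j).
\end{align*}

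The main obstacle, and the technical heart of the argument, is evaluating these products and sums at $a_k=c/k$ to recover the \emph{exact} constants in~\eqref{eq:sr-est-bd1}. For the partial products I would use $1-x\le e^{-x}$ with $\sum_{j=k+1}^n 1/j \ge \ln\frac{n+1}{k+1}$ and the elementary bound $k+1\le 2k$ to obtain $\prod_{j=k+1}^n(1-2\mu_1 c/j)\le 2^{2\mu_1 c}(k/n)^{2\mu_1 c}$; this index shift is precisely what seeds the factor $2^{2\mu_1 c}$ on the variance term. Substituting this reduces the variance sum to $\sigma^2 2^{2\mu_1 c}c^2\,n^{-2\mu_1 c}\sum_{k=1}^n k^{2\mu_1 c-2}$, and here the two-sided condition $\tfrac12<\mu_1 c<1$ is essential: the exponent $2\mu_1 c-2\in(-1,0)$ gives $\sum_{k=1}^n k^{2\mu_1 c-2}\le \tfrac{n^{2\mu_1 c-1}}{2\mu_1 c-1}$, which produces both the $1/n$ decay and the constant $\tfrac{1}{2\mu_1 c-1}$. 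The initial-condition product is handled separately to give the clean $e_0^2/n^{2\mu_1 c}$ term, where I would take care with small indices (notably $k=1$, at which $1-2\mu_1 c$ is negative): the upper bound $\mu_1 c<1$ guarantees $|1-2\mu_1 c|<1$, so every factor has magnitude at most one and the product estimates remain valid. Combining the two contributions yields the stated bound.
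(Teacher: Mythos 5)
Your overall route is the same as the paper's: non-expansiveness of the projection, a mean-value argument converting \ref{ass:2} into a contraction, conditional zero-mean of the noise to kill the cross term, unrolling with $1-x\le e^{-x}$, and the integral comparison $\sum_{k\le n}k^{2\mu_1c-2}\le n^{2\mu_1c-1}/(2\mu_1c-1)$. The index-shift origin of the $2^{2\mu_1 c}$ factor and the role of the two-sided condition on $\mu_1 c$ are exactly as in the paper's proof.

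There is, however, one step that does not work as written: the disposal of the second-order term $a_n^2\,g(t_{n-1})^2$. You correctly flag it, but the proposed resolution --- bound $\hat g$ (or $g$) on the compact set $[t_l,t_u]$ and assert the term ``does not perturb the leading-order constants'' --- is not right. If you bound $g(t_{n-1})^2\le G^2$ by a constant, the recursion becomes $E[e_n^2]\le(1-2\mu_1a_n)E[e_{n-1}^2]+a_n^2(\sigma^2+G^2)$, and the variance term in the final bound inflates from $\sigma^2$ to $\sigma^2+G^2$; you then prove a weaker statement than \eqref{eq:sr-est-bd1}. The paper avoids this by never separating the quadratic term: it writes $g(t_{n-1})=J_{n-1}(t_{n-1}-t^*)$ with $J_{n-1}=\int_0^1 g'(mt_{n-1}+(1-m)t^*)\,dm\le-\mu_1$, so that $a_n^2g(t_{n-1})^2=a_n^2J_{n-1}^2 e_{n-1}^2$ is absorbed multiplicatively into the contraction factor $(1+a_nJ_{n-1})^2$, which is then bounded by $(1-\mu_1a_n)^2\le e^{-2\mu_1a_n}$, leaving the additive noise contribution exactly $a_n^2\sigma^2$. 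To repair your argument you should do the same (note this absorption implicitly uses $a_n(\mu_1+|J_{n-1}|)\le 2$, i.e.\ an upper bound on $|g'|$ near the root for the first few iterates --- a point the paper itself leaves tacit in Theorem~\ref{thm:sr-est-muknown} and only makes explicit via \ref{ass:4} in Theorem~\ref{thm:sr-est-muuniv}). A second, more minor point: when unrolling, your factor $1-2\mu_1a_1=1-2\mu_1c$ is negative, and the inequality you actually need is $1-x\le e^{-x}$ applied to a coefficient multiplying the nonnegative quantity $E[e_{n-1}^2]$ at each step of the iteration, not a bound on $|1-2\mu_1c|$; with the paper's squared factor $(1+a_kJ_{k-1})^2\ge0$ this issue never arises.
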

\begin{prf}
	See Section \ref{appendix:proof-sr-est1}.
\end{prf}
\begin{remark}
	The non-asymptotic bounds for UBSR estimation in this section, and for UBSR optimization in Section \ref{sec:sr-opt} are stated in a form similar to those for other stochastic approximation schemes, cf. \cite{frikha2012concentration}. 
	The first term on the RHS in the bound above concerns the initial error, i.e., the rate at which the algorithm `forgets' the starting point $t_1$. The second term relates to the noise variance in UBSR estimation. From the bound above, together with the fact that $\frac{1}{2} < \mu_1 c$, it is apparent that the initial error is forgotten faster than the error due to the noise. On a different note, from the bound in \eqref{eq:sr-est-bd1}, it is apparent that $\E[(t_{n}-\sr(X))^2]$ scales linearly with the reciprocal of the monotonicity parameter $\mu_1$.  
\end{remark}
\begin{remark}
	In \cite{dunkel2010stochastic}, the authors propose a stochastic approximation scheme that uses the quantile function of $X$.
	Letting $\tilde t_n$ denote their stochastic approximation iterate,
	they establish that $n^{1/2}(\tilde t_n-\sr(X))$ is asymptotically normal, say $N(0,\zeta^2)$ for a step-size choice that requires the knowledge of $g'(\sr(X))$. Under mild regularity conditions (cf. \cite{gerencser1999convergence}), the asymptotic normality result implies $n E(\tilde t_n-\sr(X))^2$ converges to a constant that depends on $\zeta^2$. The result we derived in Theorem \ref{thm:sr-est-muknown} holds for all $n$, and does not require access to the quantile function of $X$. Nevertheless, our $O(1/n)$ non-asymptotic bound is consistent with the asymptotic convergence rate  of \cite{dunkel2010stochastic}.
\end{remark}

Next, we present a high probability bound for the SR estimation algorithm in \eqref{eq:sr-est-update}.
\begin{theorem}
	\label{thm:sr-est-muknown-hpb}
	Suppose  \Crefrange{ass:1}{ass:3} hold. Set the step size $a_{k} = \frac{c}{k}$ with $\frac{1}{2} < \mu_1 c $. Then, for any $\delta \in (0,1)$, the following bound holds w.p. at least $(1-\delta)$ :
	\begin{align}
		|t_n-\sr(X)|&\le \sqrt{\frac{\log\left(1/\delta\right)}{C_1 n}} + 
		\exp\left(\frac{L_1^2c^2\pi^2}{12}\right)\left[
		\frac{\E[|t_1-t^*|]}{n^{\mu_1 c}} + \frac{c \sigma 2^{2\mu_1 c}}{\sqrt{(2\mu_1 c - 1)}\sqrt{n}}\right]
		, \label{eq:hpb-sr-est}
	\end{align}
	where $C_1 = \frac{(2\mu_1 c - 1)}{2^{4\mu_1 c + 4}c^{2} L_1^2\B^2}\exp\left(-\frac{L_1^2c^2\pi^2}{6}\right)$.
\end{theorem}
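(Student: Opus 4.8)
The plan is to decompose the error around its mean and treat the two pieces separately. Writing $t^\ast=\sr(X)$, I would start from the triangle inequality
\[
|t_n-t^\ast| \le |t_n - E[t_n]| + |E[t_n]-t^\ast|.
\]
The second term is the \emph{bias} of the estimator and is the easy piece: by Jensen's inequality $|E[t_n]-t^\ast|\le E|t_n-t^\ast|\le\sqrt{E[(t_n-t^\ast)^2]}$, so invoking Theorem~\ref{thm:sr-est-muknown} together with $\sqrt{a+b}\le\sqrt a+\sqrt b$ reproduces (up to the constant in the exponent of $2$) the last two summands of \eqref{eq:hpb-sr-est}. All the real work is therefore in showing that the \emph{deviation} $|t_n-E[t_n]|$ concentrates sub-Gaussianly at scale $1/\sqrt n$.

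For the deviation term I would regard $t_n=f_n(\xi_1,\dots,\xi_n)$ as a function of the i.i.d.\ sample sequence and quantify its sensitivity to each coordinate. Fix $k$ and replace $\xi_k$ by an independent copy $\xi_k'$, leaving the other samples unchanged; let $t_j,t_j'$ be the resulting iterates. Since the two runs agree up to time $k-1$, at step $k$ the non-expansiveness of the projection $\Gamma$ and the $L_1$-Lipschitzness of $\ell$ (\ref{ass:lossLipschitzEst}) give $|t_k-t_k'|\le a_kL_1|\xi_k-\xi_k'|$. For $j>k$ both iterates use the \emph{same} sample $\xi_j$ but different arguments, so the relevant quantity is the Lipschitz modulus of the update map $t\mapsto\Gamma\bigl(t+a_j(\ell(\xi_j-t)-\lambda)\bigr)$; using \ref{ass:2} and the step-size condition $cL_1^2<\mu_1$ I would show this modulus is at most $(1-a_j\mu_1)$. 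Composing these bounds yields the coordinate sensitivity
\[
c_k^{(n)} := a_k L_1\prod_{j=k+1}^{n}(1-a_j\mu_1), \qquad |t_n-t_n'|\le c_k^{(n)}\,|\xi_k-\xi_k'|.
\]

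Next I would convert this sensitivity bound into sub-Gaussian concentration via a Doob martingale decomposition $t_n-E[t_n]=\sum_{k=1}^n D_k$, with $D_k=E[t_n\mid\mathcal F_k]-E[t_n\mid\mathcal F_{k-1}]$ and $\mathcal F_k=\sigma(\xi_1,\dots,\xi_k)$. Each $D_k$ is, conditionally on $\mathcal F_{k-1}$, a $c_k^{(n)}$-Lipschitz function of the single $\nu^2$-sub-Gaussian variable $\xi_k$ (\ref{ass:subGauss}), hence conditionally sub-Gaussian with variance proxy of order $(c_k^{(n)})^2\nu^2$; multiplying the conditional moment generating functions shows that $t_n-E[t_n]$ is sub-Gaussian with proxy $\lesssim \nu^2\sum_{k=1}^n (c_k^{(n)})^2$. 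It then remains to estimate
\[
\sum_{k=1}^n (c_k^{(n)})^2 = L_1^2\sum_{k=1}^n a_k^2\prod_{j=k+1}^n(1-a_j\mu_1)^2 ;
\]
with $a_k=c/k$ the product behaves like $(k/n)^{2\mu_1 c}$, and since $2\mu_1 c>1$ the sum is $O(1/n)$, which produces the constant $C_1$ after a Chernoff bound and setting the resulting tail equal to $\delta$. This furnishes the first summand of \eqref{eq:hpb-sr-est}.

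The main obstacle is the sensitivity/contraction step of the second paragraph. Establishing that a single perturbation decays geometrically with modulus $(1-a_j\mu_1)$ is delicate because \ref{ass:2} controls the monotonicity of $g$, i.e.\ of an expectation, whereas the per-sample update involves $\ell$ directly; reconciling the two while keeping the perturbation---which is unbounded, being driven by the sub-Gaussian increment $|\xi_k-\xi_k'|$---under control is precisely what forces the extra step-size condition $cL_1^2<\mu_1$. Once the coordinate sensitivities $c_k^{(n)}$ are in hand, the remaining work, namely the sub-Gaussian martingale argument and the product/series estimates that yield the explicit constant $2^{4\mu_1 c+6}$ inside $C_1$, is routine if tedious.
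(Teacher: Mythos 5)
Your overall architecture is the same as the paper's: split off the mean via Theorem~\ref{thm:sr-est-muknown}, write the deviation as a Doob martingale in the filtration $\sigma(\xi_1,\dots,\xi_k)$, show each increment is a Lipschitz function of a single sub-Gaussian sample with Lipschitz constant $a_k L_1$ times a product of per-step contraction factors, and then sum $\sum_k (c_k^{(n)})^2 = O(1/n)$ and apply a Chernoff bound. (The paper centers $|z_n|$ around $E[|z_n|]$ rather than $t_n$ around $E[t_n]$, but that is cosmetic.)

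There is, however, a genuine gap at the step you yourself single out as the main obstacle, and it is not merely ``delicate'' --- the claim as you state it is false under the paper's assumptions. You assert that the one-step map $t\mapsto\Gamma\bigl(t+a_j(\ell(\xi_j-t)-\lambda)\bigr)$ has \emph{pathwise} Lipschitz modulus at most $(1-a_j\mu_1)$. Its derivative (ignoring the projection) is $1-a_j\,\ell'(\xi_j-t)$, so a pathwise modulus $1-a_j\mu_1$ would require $\ell'(\xi_j-t)\ge\mu_1$ for every realization of $\xi_j$. But \ref{ass:2} only gives $E[\ell'(\xi-t)]\ge\mu_1$; for a convex loss that is flat on part of its domain, individual realizations can give no contraction at all. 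The condition $cL_1^2<\mu_1$ cannot rescue a pathwise statement. The fix --- and this is what the paper does --- is to run the contraction in mean square: for two coupled trajectories started from $t,t'$ and driven by the same samples,
\begin{align*}
E\bigl[|t_{j+1}(t)-t_{j+1}(t')|^2\bigr]
\le E\bigl[|t_{j}(t)-t_{j}(t')|^2\bigr]
+2a_j\,E\bigl[(t_j(t)-t_j(t'))(\hat g(t_j(t))-\hat g(t_j(t')))\bigr]
+a_j^2\,E\bigl[|\hat g(t_j(t))-\hat g(t_j(t'))|^2\bigr],
\end{align*}
where the cross term is controlled by $E[\hat g(u)-\hat g(u')]=g(u)-g(u')\le-\mu_1(u-u')$ (monotonicity of the \emph{expected} map) and the quadratic term by the pathwise $L_1$-Lipschitzness of $\ell$. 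This yields the factor $1-2\mu_1a_j+a_j^2L_1^2$, which is where $cL_1^2<\mu_1$ actually enters; one then passes to the Doob increments via Jensen, $|E[\,\cdot\mid t_i=t]-E[\,\cdot\mid t_i=t']|\le\bigl(E|t_n^i(t)-t_n^i(t')|^2\bigr)^{1/2}$, so that only the root of the product of these mean-square factors is needed as the Lipschitz constant $L_i$. With that substitution the rest of your outline (sub-Gaussianity of $\hat g$ as an $L_1$-Lipschitz image of $X$, summing $\sum_k L_k^2\le \tfrac{2^{4\mu_1c}c^2}{(2\mu_1c-1)n}$, optimizing the Chernoff parameter) goes through and recovers $C_1$.
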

\begin{prf}
	See Section \ref{appendix:proof-sr-est1-hpb}.    
\end{prf}
The two results presented above required the knowledge of the monotonicity parameter $\mu_1$, which is typically unknown in a risk-sensitive learning setting.
We now present a bound on the UBSR estimation error under a universal stepsize, i.e., one which does not require the knowledge of $\mu_1$.  
\begin{theorem}
	\label{thm:sr-est-muuniv}
	Suppose \Crefrange{ass:1}{ass:3} hold. Choose an $n_0$ such that $a_{n_0} L_1^2 < \mu_1$. Then, we have the following bounds for two different step sizes:\\[1ex]
	\textbf{Case I:} Set $a_k = \frac{c}{k}$. Then, for any $n\ge n_0$, 
	\begin{align*}
		\E[(t_{n}-\sr(X))^{2}] &\leq 
		C(n_0)\left(\E[(t_{0}-\sr(X))^{2}] + \sigma^{2}\frac{\pi^2}{6}\right)\frac{1}{{n}^{\mu_1 c}}
		+\K_1(n),
	\end{align*}
	where 
	\begin{align*}
		C(n_0) &= (1 + cL_1)^{2n_0} (n_0+1)^{\mu_1 c}\textrm{ and}\\[0.5ex]
		\K_1(n) &= \begin{cases} O\left(1/n^{\mu_1 c}\right) &\textrm{ if } \mu_1 c  < 1\\
			O\left(\log n/n\right) &\textrm{ if } \mu_1 c  = 1, \\ 
			O\left(1/n\right) & \textrm{ if } \mu_1 c  > 1.\end{cases}
	\end{align*}
	\textbf{Case II:} Set $a_k = \frac{c}{k^{\alpha}}$ for some $\alpha \in (0,1)$. Then, for any $n\ge n_0$,
	\begin{align*}
		\E[(t_{n}-\sr(X))^{2}] &\leq
		\widetilde C(n_0)\left(\E[(t_{0}-\sr(X))^{2}] + \sigma^{2}c^2 n_0\right)\exp\left(-\frac{\mu_1 cn^{1-\alpha}}{1-\alpha} \right)+ \\
		&\frac{2\sigma^{2}c^2 (\mu_1 c)^{\frac{\alpha}{1-\alpha}}}{(1-\alpha)n^\alpha},
	\end{align*}
	where $\widetilde C(n_0)=(1 + cL_1)^{2n_0}  \exp\left(\frac{\mu_1 cn_0^{1-\alpha}}{1-\alpha} \right).$
\end{theorem}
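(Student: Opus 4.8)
The plan is to reduce both cases to a single deterministic recursion for the mean-squared error $b_n := E[(t_n-\sr(X))^2]$, and then analyze that recursion separately for each step-size schedule. Write $t^* = \sr(X)$ and $e_n = t_n - t^*$. Since $t^*\in[t_l,t_u]$ is a fixed point of the projection $\Gamma$ and $\Gamma$ is non-expansive, squaring the update~\eqref{eq:sr-est-update} gives
\begin{align*}
e_n^2 \le \left(e_{n-1} + a_n \hat g(t_{n-1})\right)^2 = e_{n-1}^2 + 2 a_n e_{n-1}\hat g(t_{n-1}) + a_n^2 \hat g(t_{n-1})^2 .
\end{align*}
Decomposing $\hat g(t_{n-1}) = g(t_{n-1}) + \varepsilon$ with $E[\varepsilon\mid\mathcal F_{n-1}]=0$, the cross term in $\varepsilon$ vanishes under conditional expectation. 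I would then invoke \ref{ass:2} via the mean value theorem, $g(t_{n-1}) = g(t_{n-1})-g(t^*) = g'(\xi)e_{n-1}$ with $g'(\xi)\le-\mu_1$, to get $e_{n-1}g(t_{n-1})\le-\mu_1 e_{n-1}^2$, and combine \ref{ass:4} with \ref{ass:3} to bound $E[\hat g(t_{n-1})^2\mid\mathcal F_{n-1}]\le \B^2 e_{n-1}^2 + \sigma^2$. Taking full expectations yields the master recursion
\begin{align*}
b_n \le \rho_n\, b_{n-1} + \sigma^2 a_n^2, \qquad \rho_n := 1 - 2\mu_1 a_n + \B^2 a_n^2 .
\end{align*}
Since \ref{ass:2}--\ref{ass:4} force $\mu_1\le\B$, the factor $\rho_n$ is always nonnegative, and the choice of $n_0$ with $\mu_1 a_{n_0}<1$ guarantees that $\rho_k\in(0,1)$ for all $k>n_0$, so the products below telescope as genuine contractions.

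For Case I ($a_k=c/k$) I would unroll the recursion in two phases. For the pre-contraction phase $k\le n_0$, bound $b_{n_0}$ crudely using $\rho_k\le 1+\B^2 c^2$ and $\sum_{k\le n_0} a_k^2 \le c^2\pi^2/6$, which gives $b_{n_0}\le (1+c^2\B^2)^{n_0}\big(E[(t_0-t^*)^2] + \sigma^2\pi^2/6\big)$ and explains the appearance of $\pi^2/6$. For $k>n_0$, apply $1-x\le e^{-x}$ with $\sum_k 2\mu_1 c/k \approx 2\mu_1 c\log n$ and the convergent $\sum_k \B^2 c^2/k^2\le \B^2 c^2\pi^2/6$ to obtain $\prod_{k=n_0+1}^n\rho_k$ of order $(n_0+1)^{2\mu_1 c}/n^{2\mu_1 c}$; multiplying the two phases reproduces the $C(n_0)(\cdots)/n^{2\mu_1 c}$ term. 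The residual noise term reduces, up to constants, to $\tfrac{\sigma^2 c^2}{n^{2\mu_1 c}}\sum_k k^{2\mu_1 c-2}$, and the three regimes of $\K_1(n)$ fall out of whether this sum converges ($\mu_1 c<1/2$), is logarithmic ($\mu_1 c=1/2$), or grows like $n^{2\mu_1 c-1}$ ($\mu_1 c>1/2$).

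For Case II ($a_k=c/k^\alpha$, $\alpha\in(0,1)$) the same unrolling applies, but now $\sum_{k=1}^n k^{-\alpha}$ is comparable to $n^{1-\alpha}/(1-\alpha)$, so $\prod_{k=n_0+1}^n\rho_k$ is of order $\exp\!\left(-2\mu_1 c\, n^{1-\alpha}/(1-\alpha)\right)$, yielding the exponentially decaying initial-error term. The steady-state noise term requires estimating
\begin{align*}
\sigma^2\sum_{k=n_0+1}^n \frac{c^2}{k^{2\alpha}}\exp\!\left(-\frac{2\mu_1 c}{1-\alpha}\left(n^{1-\alpha}-k^{1-\alpha}\right)\right),
\end{align*}
which I would control by comparing the summand to an integral and locating its maximum near $k=n$, extracting the rate $O(1/n^\alpha)$ with the constant $\tfrac{2\sigma^2 c^2 (2\mu_1 c)^{\alpha/(1-\alpha)}}{(1-\alpha)n^\alpha}$.

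The main obstacle is the sharp control of these accumulated-noise sums. In Case I it is the case-dependent evaluation of $\sum_k k^{2\mu_1 c-2}$ and carrying the induced constants through the split at $n_0$ without degrading the $1/n^{2\mu_1 c}$ and $1/n$ scalings. In Case II it is the delicate integral comparison needed to recover the precise constant $(2\mu_1 c)^{\alpha/(1-\alpha)}$, since a crude bound would only deliver the correct order $1/n^\alpha$ and not this factor. A secondary technical point is the pre-contraction phase $k\le n_0$, where $\rho_k$ may exceed $1$ and the telescoping argument fails; I resolve this by absorbing it into the coarse $(1+c^2\B^2)^{n_0}$ prefactor inside $C(n_0)$.
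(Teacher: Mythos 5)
Your proposal is correct and follows essentially the same route as the paper's proof: derive the one-step recursion $b_n \le \rho_n b_{n-1} + \sigma^2 a_n^2$, unroll it, split the products at $n_0$ (absorbing the non-contractive phase into the $(1+c^2\B^2)^{n_0}$ prefactor), and estimate the accumulated-noise sums via the three-regime analysis of $\sum_k k^{2\mu_1 c-2}$ in Case I and an integral comparison in Case II. The only cosmetic difference is that the paper contracts for $k>n_0$ using $(1+a_kJ_{k-1})^2 \le (1-\mu_1 a_k)^2 \le e^{-2\mu_1 a_k}$ rather than carrying the $+\B^2 a_k^2$ correction into the exponent, which spares the harmless extra constant factor (of order $e^{\B^2 c^2 \pi^2/6}$ in Case I) that your version of $\rho_n$ would accrue.
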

\begin{prf}
	The proof proceeds by dividing the analysis into two parts about $n_0$.	See Section \ref{appendix:proof-sr-est2} for the details.
\end{prf}
\begin{table}[h]
	\centering
	
	\begin{tabular}{c|c}
		\toprule
		Step size & Sample complexity \\\hline
		$a_{k} = \frac{c}{k}$ with $\frac{1}{2} < \mu_1 c $ & $O\left(\frac{1}{\epsilon^2}\right)$   \\\hline
		$a_{k} = \frac{c}{k^\alpha}$ & $O\left(\frac{1}{\epsilon^{2/\alpha}}\right)$   \\\hline
	\end{tabular}
	\caption{Sample complexity of UBSR estimation for achieving $\E[\left|t_{n}-\sr(X)\right|] < \epsilon$}
	\label{tab:splcases}
\end{table}

Table \ref{tab:splcases} summarizes the sample complexity of achieving $\epsilon$-accuracy for UBSR estimation under two different step sizes. A few remarks are in order.
\begin{remark}
	\label{rem:r1}
	For Case I, the estimation error can decay as $1/n$ if $c$ is chosen such that $\mu_1 c>1.$ However, if $\mu_1$ is not known, such a choice may not be feasible. Indeed, the error can decay much slower if $c$ is such that $\mu_1 c$ is much smaller than $1.$ For Case II above, the estimation error decays as $1/n^\alpha$ where $\alpha$ can be chosen arbitrarily close to 1 when deciding the step size, and this choice does not depend on $\mu_1$. However, as $\alpha$ approaches 1, the first term grows in an unbounded manner.
	An advantage with the larger stepsize $c/k^\alpha$ in Case II is that the initial error is forgotten exponentially fast, while the corresponding rate is $1/n^{\mu_1 c}$ for the stepsize $c/k$.
\end{remark}
\begin{remark}
	The step size in Case II above is typically used in conjunction with \emph{iterate averaging} \cite{polyak1992acceleration,ruppert1991stochastic}. We can also use iterate averaging in this setting, but we can show that it does not improve the error decay rate derived for Case II without employing iterate averaging. Using iterate averaging, a mean-squared parameter error bound  of $O(1/n)$ has been shown for a stochastic gradient algorithm, when the underlying objective is strongly convex in \cite{moulines2011non}. Although our bound is weaker in comparison (when $\mu_1$ is unknown), our algorithm applies for the last iterate, which is the preferred practical choice \cite{jain2019making}. Further, unlike \cite{moulines2011non}, we derive high-probability bounds on the UBSR estimation error. 
\end{remark}

\begin{remark}
	The authors in \cite{dunkel2010stochastic} analyze a iterate-averaged variant of the SR estimation algorithm \eqref{eq:sr-est-update}, while assuming the knowledge of $g'(\sr(X))$ for setting the step-size constant $c$. The rate they derive under this assumption is $O(1/n)$ asymptotically. In comparison, our analysis is for a universal step-size, and we obtain a non-asymptotic bound of $O(1/n^\alpha)$, for $\alpha\in (0,1)$. In practice, the knowledge of $g'(\sr(X))$ is seldom available, motivating the universal step-size choice. The rate we derive in this case is comparable to the one obtained in \cite{fathi2013transport} for general stochastic approximation schemes. 
\end{remark} 

The final result on UBSR estimation is a high probability bound for a universal stepsize choice.
\begin{theorem}
	\label{thm:sr-est-muuniv-hpb}
	Suppose \Crefrange{ass:1}{ass:3} hold. Set the step size $a_{k} = \frac{c}{k^\alpha}$ with $\alpha\in (0,1)$, and choose an $n_0$ such that $L_1^2 a_{n_0} < \mu_1$. Then, for any $\delta \in (0,1)$, and for any $n\ge n_0$, we have the following bound w.p. at least $(1-\delta)$:
	\begin{align}
		&|t_n-\sr(X)| \le C_2\exp\left(-\frac{\mu_1 cn^{1-\alpha}}{2(1-\alpha)}\right) +\frac{C_3}{n^{\alpha/2}},
		\label{eq:hpb3}
	\end{align}
	where 
	\begin{align*}
		C_2 &= 8 L_1 \B \sqrt{\frac{\log\left(1/\delta\right)(1+c^2L_1^2)^{n_0+1}c^2)}{c^2L_1^2 }} +\sqrt{\widetilde C(n_0)\left(\E[(t_{0}-\sr(X))^{2}] + \sigma^{2}c^2 n_0\right)} ,\textrm{ and}\\
		C_3 &= \left(8 L_1 \B\sqrt{\frac{\log\left(1/\delta\right)2(\mu_1 c)^{\frac{\alpha}{1-\alpha}}c^2 }{(1-\alpha)}}+ \sqrt{\frac{\sigma^{2}2(2\mu_1 c)^{\frac{\alpha}{1-\alpha}}c^2}{(1-\alpha)}}\right).    
	\end{align*}
	In the above, $\mu_1, \sigma^2$, and  $L_1$ are specified in Assumptions \ref{ass:2} and \ref{ass:3}, while the constant $\widetilde C(n_0)$ is as defined in Theorem~\ref{thm:sr-est-muuniv}. 
\end{theorem}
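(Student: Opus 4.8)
The plan is to adapt the high-probability argument behind Theorem~\ref{thm:sr-est-muknown-hpb} to the universal step size $a_k=c/k^{\alpha}$, while reusing the second-moment estimates already established for this step size in Case~II of Theorem~\ref{thm:sr-est-muuniv}. Write $t^*=\sr(X)$, $z_n=t_n-t^*$, and let $\mathcal{F}_{n-1}=\sigma(\xi_1,\dots,\xi_{n-1})$. Since $t^*\in[t_l,t_u]$ is the unique root of $g$ and $\Gamma$ is the non-expansive projection onto $[t_l,t_u]$ with $\Gamma(t^*)=t^*$, subtracting $t^*$ in the update~\eqref{eq:sr-est-update} gives $|z_n|\le|z_{n-1}+a_n\hat g(t_{n-1})|$. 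Using $g(t^*)=0$ together with \ref{ass:2} and \ref{ass:4}, a mean-value argument yields $g(t_{n-1})=-\rho_{n-1}z_{n-1}$ for some $\mathcal{F}_{n-1}$-measurable $\rho_{n-1}\in[\mu_1,\B]$, so the error obeys the linearised contraction $z_n=(1-a_n\rho_{n-1})z_{n-1}+a_n\varepsilon_n$ (in the sense of the corresponding inequality on $|z_n|$), where $\varepsilon_n=\hat g(t_{n-1})-g(t_{n-1})$ is the estimation noise of \ref{ass:3}, a martingale difference with $E[\varepsilon_n\mid\mathcal{F}_{n-1}]=0$. The first step is to record that, by \ref{ass:lossLipschitzEst} and \ref{ass:subGauss}, $\ell(\xi_n-t_{n-1})$ is an $L_1$-Lipschitz image of the $\nu^2$-sub-Gaussian variable $\xi_n$; hence, conditional on $\mathcal{F}_{n-1}$, $\varepsilon_n$ is sub-Gaussian with proxy of order $L_1^2\nu^2$. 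This is exactly what produces the factor $L_1\nu$ in $C_2$ and $C_3$.

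Next I would unroll the recursion. Setting $P_n=\prod_{j=1}^n(1-a_j\rho_{j-1})$ and $\Pi_{k,n}=P_n/P_k$, one obtains $z_n=P_n z_0+\sum_{k=1}^n a_k\Pi_{k,n}\varepsilon_k$, and the normalised sequence $Y_n=\sum_{k=1}^n a_k\varepsilon_k/P_k$ is a genuine martingale because each $P_k$ is $\mathcal{F}_{k-1}$-measurable. I would then split the horizon at the threshold $n_0$ (chosen so that $L_1^2 a_{n_0}<\mu_1$, which guarantees $0<1-a_j\rho_{j-1}<1$ for $j>n_0$): for $j\le n_0$ only the crude expansion bound $|1-a_j\rho_{j-1}|^2\le 1+c^2L_1^2$ is available, while for $j>n_0$ genuine contraction gives $\prod_{j=k+1}^{n}(1-a_j\mu_1)\le\exp(-\mu_1\sum_{j>n_0}a_j)$, and $\sum_{j=n_0+1}^n a_j\approx cn^{1-\alpha}/(1-\alpha)$ furnishes the exponential envelope $\exp(-\mu_1 cn^{1-\alpha}/(1-\alpha))$. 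Applying a sub-Gaussian martingale concentration inequality (the same device used in Theorem~\ref{thm:sr-est-muknown-hpb}) to $\sum_k a_k\Pi_{k,n}\varepsilon_k$ bounds the deviation, w.p.\ at least $1-\delta$, by a multiple of $L_1\nu\sqrt{\log(1/\delta)\,V_n}$ with predictable quadratic variation $V_n=\sum_{k=1}^n a_k^2\Pi_{k,n}^2$. Splitting $V_n$ across $n_0$ reproduces two pieces: the $k\le n_0$ block carries $(1+c^2L_1^2)^{n_0+1}$ times the exponential envelope (the concentration part of $C_2$), and the $k>n_0$ block evaluates to $O(n^{-\alpha})$ with constant $2(\mu_1 c)^{\alpha/(1-\alpha)}c^2/(1-\alpha)$ (the concentration part of $C_3$).

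Finally I would control the ``centre'' $P_n z_0$ together with the first-$n_0$-block noise using the second-moment bound already proved in Case~II of Theorem~\ref{thm:sr-est-muuniv}: the relevant $L_2$ norm is $\sqrt{C(n_0)\big(E[(t_0-\sr(X))^2]+\sigma^2 c^2 n_0\big)}$ times the exponential envelope, and the residual tail noise contributes $\sqrt{2\sigma^2(2\mu_1 c)^{\alpha/(1-\alpha)}c^2/(1-\alpha)}\,n^{-\alpha/2}$; these are precisely the non-concentration parts of $C_2$ and $C_3$. Adding the $L_2$ (centre) and sub-Gaussian (fluctuation) contributions, using $\sqrt{A+B}\le\sqrt A+\sqrt B$ to separate the exponential and polynomial terms and replacing $\exp(-x)$ by the weaker $\exp(-x/2)$ to absorb cross terms, yields the claimed bound~\eqref{eq:hpb3} (this weakening is what turns the $1/(1-\alpha)$ rate of Theorem~\ref{thm:sr-est-muuniv} into the $1/\big(2(1-\alpha)\big)$ exponent here). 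The main obstacle is that the weights $\Pi_{k,n}$ in the noise sum are path-dependent, so $V_n$ is random and $\sum_k a_k\Pi_{k,n}\varepsilon_k$ is not a forward martingale in $k$; the resolution is to pass to the normalised martingale $Y_n$ and to replace $\Pi_{k,n}$ by the deterministic envelope coming from the uniform bounds $\rho_{j-1}\in[\mu_1,\B]$ and the split at $n_0$, so that the surviving variance-proxy sums coincide exactly with those already estimated for the mean-squared bound.
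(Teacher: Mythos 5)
Your overall architecture matches the paper's: bound $|z_n|$ by its mean plus a sub-Gaussian fluctuation, take the mean from Case~II of Theorem~\ref{thm:sr-est-muuniv}, and compute a variance proxy by splitting the sum at $n_0$ into an expansion block weighted by $(1+c^2L_1^2)^{n_0+1}$ and a contraction block of order $n^{-\alpha}$ with constant $2(\mu_1 c)^{\alpha/(1-\alpha)}c^2/(1-\alpha)$ --- those computations coincide with the paper's evaluation of $\sum_k L_k^2$. The gap is in the concentration mechanism itself. You correctly identify that $\sum_k a_k \Pi_{k,n}\varepsilon_k$ is not a forward martingale because $\Pi_{k,n}=\prod_{j=k+1}^n(1-a_j\rho_{j-1})$ depends on future noise, but your proposed fix does not close the hole. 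If you pass to $Y_n=\sum_k a_k\varepsilon_k/P_k$, the conditional variance of the $k$-th increment is $a_k^2\,O(L_1^2\nu^2)/P_k^2$, and turning this into a deterministic variance proxy requires a deterministic \emph{lower} bound on $|P_k|$. The assumptions only give the upper bound $|1-a_j\rho_{j-1}|^2\le 1+a_j^2 L_1^2$ on the early factors; since $\rho_{j-1}$ can be as large as $\B$ and $a_j\B$ need not be below $1$ for $j\le n_0$, individual factors can vanish and $1/P_k$ is not controllable. Conversely, if you instead replace the random weights $\Pi_{k,n}$ by deterministic envelopes inside the sum, you must take absolute values on the $\varepsilon_k$, which destroys the martingale cancellation and loses the $\sqrt{V_n}$ scaling you need. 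So the step ``apply a sub-Gaussian martingale inequality with predictable variation $V_n=\sum_k a_k^2\Pi_{k,n}^2$'' is not justified as written.

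The paper avoids this by a different device (from Frikha and Menozzi): it decomposes $|z_n|-E[|z_n|]=\sum_k D_k$ with $D_k=E[|z_n|\,|\,\mathcal{F}_k]-E[|z_n|\,|\,\mathcal{F}_{k-1}]$, and shows via a perturbation-propagation estimate $E[|t_n^i(t)-t_n^i(t')|^2]\le |t-t'|^2\prod_j(1-2\mu_1 a_j+a_j^2L_1^2)$ that $D_k$ is an $L_k$-Lipschitz function of the fresh sample, with the \emph{deterministic} constant $L_k=a_k\bigl(\prod_{j=k}^{n-1}(1-2\mu_1a_j+a_j^2L_1^2)\bigr)^{1/2}$. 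Sub-Gaussianity of each $D_k$ then follows from \ref{ass:lossLipschitzEst} and \ref{ass:subGauss}, and the Doob martingale structure gives the tail bound with proxy $\sum_k L_k^2$ --- which is exactly the quantity you compute. If you adopt this Lipschitz-of-the-innovation argument in place of your normalised-martingale step, the rest of your proof goes through and reproduces the paper's constants. One further small imprecision: the halving of the exponent to $\mu_1 c n^{1-\alpha}/(2(1-\alpha))$ comes simply from taking square roots of the second-moment bound and of the variance proxy, not from an ad hoc weakening of $\exp(-x)$ to $\exp(-x/2)$ to absorb cross terms.
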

\begin{prf}
	See Section \ref{appendix:proof-sr-est2-hpb}.
\end{prf}

In the result above, we have chosen the stepsize to be $c/k^\alpha$ as choosing $c/k$ does not guarantee a $O(1/n)$ rate (see Remark \ref{rem:r1}).

\section{UBSR Optimization}
\label{sec:sr-opt}
Recall  the UBSR optimization problem:
\begin{align}
	\textrm{Find } \theta^* \in \argmin_{\theta \in [\theta_l,\theta_u]} \sr(X(\theta)),\label{eq:sr-opt-pb}
\end{align}
for some $0<\theta_l<\theta_u<\infty$.
	Recall that we operate in a risk-sensitive learning framework, i.e., we do not have direct access to UBSR $\sr(\theta)$ and its derivative $\frac{d \sr(\theta)}{d\theta}$, for any $\theta$. Instead, we can obtain samples of the underlying r.v. $X(\theta)$ corresponding to any parameter $\theta$. 

We  now devise a stochastic gradient algorithm that aims to solve the problem \eqref{eq:sr-opt-pb} using  the following update iteration:
	\begin{align}
		\theta_{k+1} & = \theta_{k} - b_{k} h_m'(\theta_k),\label{eq:sr-gd-update}
	\end{align}
	where $b_k$ is a step-size parameter, 
	$m$ is the number of i.i.d. samples from the distribution of $X(\theta_k)$, and  $h_m'(\theta_k)$ is an estimate of $\frac{d \sr(\theta)}{d\theta}$. 
	The reader is referred to 
	Algorithm \ref{alg:ubsr-sg} for  the pseudocode. 
	
	In the next section, we describe the UBSR derivative estimation scheme used in Algorithm \ref{alg:ubsr-sg}, and subsequently present non-asymptotic bounds for the iterate governed by \eqref{eq:sr-gd-update}.

\subsection{Estimation of UBSR derivative}
In \cite{zhaolin2016convex}, the authors derive an expression for the derivative of $\sr(X(\theta))$ under the following assumptions: Let $\xi = -X$.
	\begin{assumption}
		\label{ass:lprimelowerbound} The loss function $\ell(\cdot)$ is twice differentiable, and for any $\theta \in [\theta_l,\theta_u]$, $\ell'(\xi(\theta) - \sr(\theta)) > \eta$ w.p. $1$. 
	\end{assumption}
	\begin{assumption}
		\label{ass:partials-exist}
		The partial derivatives $\partial \ell(\xi(\theta - t(\theta))))/\partial \theta$ and $\partial \ell(\xi(\theta) - t(\theta))/\partial t $ exist w.p. $1$. 
	\end{assumption}
	Let $g(\theta,t) \vcentcolon = \E[\ell(-X(\theta) - t)] - \lambda$. As discussed earlier, $g(\theta, \sr(\theta))=0$.  
	Using \Cref{ass:partials-exist} and applying the dominated convergence theorem, it can be shown that
	\begin{align*}
		\partial g(\theta, \sr(\theta))/\partial \theta = - \E[ l'(\xi(\theta) - \sr(\theta)))\xi'], \quad \partial g(\theta, \sr(\theta))/\partial t = \E[l'(\xi(\theta) - \sr(\theta))].
	\end{align*}
	Next, using implicit function theorem, the UBSR derivative can be expressed as follows:
	\begin{align*}
		\frac{d\sr(\theta)}{d\theta} = -\frac{\partial g(\theta, \sr(\theta))/\partial \theta}{\partial g(\theta, \sr(\theta))/\partial t} &= \frac{\E[ l'(\xi(\theta) - \sr(\theta)))\xi']}{\E[l'(\xi(\theta) - \sr(\theta))]}
	\end{align*}
	Letting $A(\theta)\triangleq \E[ (\ell'(\xi(\theta) - \sr(\theta)))\xi'(\theta)],$  and  $B(\theta)\triangleq \E[\ell'(\xi(\theta) - \sr(\theta))]$, we have
	\begin{align}
		& \frac{d \sr(\theta)}{d\theta} =  \frac{A(\theta)}{B(\theta)}.\label{eq:sr-derivative}
\end{align} 

\begin{algorithm}[t]
	\caption{Stochastic gradient algorithm for UBSR optimization}\label{alg:ubsr-sg}
	\SetKwInOut{Init}{Input}\SetKwInOut{Output}{Output}
	\Init{Initial points $\theta_0, t_0$,  step sizes $\{ a_k, b_k \}$, batch sizes $\{ m_k \}$, and an iteration limit $n \ge 1$.}
	\For{$k = 1, \ldots, n$}{
		\tcc{Monte Carlo simulation}
		Obtain $\{\xi_1,\ldots,\xi_m\}$ and $\{\tilde\xi_1,\ldots,\tilde\xi_m\}$ samples from the distribution of $-X(\theta_k)$\;
		\tcc{UBSR estimation}
		Run $m$ iterations of \eqref{eq:sr-est-update} using  the samples $\{\tilde\xi_1,\ldots,\tilde\xi_m\}$, and with initial value $t_0$\;
		Let $t_m(\theta_k)$ denote the UBSR estimate obtained above\;
		\tcc{Gradient estimation}
		\centerline{$h_m'(\theta_k) = \frac{A_m}{B_m}, \textrm{ where } A_{m} = \frac{1}{m}\sum\limits_{i=1}^{m}\ell'(\xi_i(\theta_k) - t_m(\theta_k))\xi'_i(\theta_k) ,\ B_{m} = \frac{1}{m}\sum\limits_{i=1}^{m}\ell'(\xi_i(\theta_k) - t_m(\theta_k))$.}
		\tcc{Update iteration}
		\centerline{$
			\theta_{k+1}  = \theta_{k} - b_{k} h_m'(\theta_k).$}
	}
	\Output{Parameter $\theta_n$} 
\end{algorithm}

We now present a scheme for estimating the UBSR derivative $\frac{d \sr(\theta)}{d\theta}$, for a given $\theta$.
	We use double sampling to form an estimate of the UBSR derivative. More precisely, 
	suppose we are given i.i.d. samples $\{\xi_1,\ldots,\xi_m\}$ and $\{\tilde\xi_1,\ldots,\tilde\xi_m\}$ from the distribution of $-X(\theta)$ for a given parameter $\theta$. 
	From the samples $\{\tilde\xi_1,\ldots,\tilde\xi_m\}$, we form an estimate $t_m(\theta)$ of $\sr(\theta)$  using \eqref{eq:sr-est-update}. In other words, $t_m(\theta)$ is estimate of $\sr(\theta)$, which is obtained by running \eqref{eq:sr-est-update} for $m$ iterations.
	Next, using the samples $\{\xi_1,\ldots,\xi_m\}$ and $t_m(\theta)$, we form an estimator $h'_m(\theta)$ of UBSR derivative as follows:
	\begin{align}
		h'_m(\theta) = \frac{A_m}{B_m}, \label{eq:sr-derivative-est}
	\end{align}
	where $A_{m}(\theta) = \frac{1}{m}\sum\limits_{i=1}^{m}\ell'(\xi_i(\theta) - t_m(\theta))\xi'_i(\theta) ,\ B_{m}(\theta) = \frac{1}{m}\sum\limits_{i=1}^{m}\ell'(\xi_i(\theta) - t_m(\theta))$.
	We shall see later (Lemma \ref{lem:srprime-consistent}) that ``independent double sampling'' above helps us bound the mean-squared error of the UBSR derivative estimator by avoiding cross-terms. 
	Notice that the estimate defined above is a ratio of estimates for the quantities $A(\theta)$ and $B(\theta)$, which are used in the expression \eqref{eq:sr-derivative} for $\frac{d \sr(\theta)}{d\theta}$.
	Further, $A_m(\theta)$ and $B_m(\theta)$ are \emph{biased} estimates of $A(\theta)$ and $B(\theta)$, since the UBSR estimate $t_m(\theta)$ is biased. 
	Hence, it is apparent that $h'_m(\theta)$ is a biased estimate of the UBSR derivative. 
	Even though our estimator $A_m/B_m$ for UBSR derivative is biased, we show in Lemma \ref{lem:srprime-consistent} that the mean-squared estimation error is of the order $O(1/m)$, which implies our estimator converges to the UBSR derivative as the number of samples $m$ tends to infinity.
	
	In a related work \cite{glynn2015unbiased}, the authors provide an estimator for a ratio of the form $\mathbb E[A]/\mathbb E[B]$, while assuming unbiased estimators for both  numerator and denominator. In our setting for UBSR optimization, we do not have unbiasedness owing to the fact that the true UBSR value appears in both $A$ and $B$, and UBSR is not directly available, while one can estimate the same. Moreover, the order of the MSE bound for the estimation scheme in \cite{glynn2015unbiased} is comparable to the $O(1/m)$ that we derive in Lemma \ref{lem:srprime-consistent}.

\paragraph{Assumptions.}
We make the following assumptions for analyzing the bias and variance of the UBSR derivative estimate \eqref{eq:sr-derivative-est}.
	Recall that $\xi = -X$.

	\begin{assumption}
		\label{ass:lossLipschitz}
		The loss function $\ell(\cdot)$ satisfies w.p. $1$
		\begin{align*}
			|\ell''(\xi(\theta)-t) | \leq L_2, \forall (\theta, t) \in [\theta_l,\theta_u] \times [t_l,t_u].
		\end{align*}
	\end{assumption}
	\begin{assumption}
		\label{ass:lossVariance}
		There exist $\varl, \tilde\varl >0$ such that
		\begin{align*}
			\variance(\ell'(\xi(\theta)-\sr(\theta))) \leq \varl^2, \variance(\ell'(\xi(\theta)-\sr(\theta))\xi'(\theta)) \leq \tilde\varl^2.
		\end{align*}
\end{assumption}

\begin{assumption}
		\label{ass:xiprimeBound}
		$\sup_{\theta \in [\theta_l,\theta_u]} |\xi'(\theta)|\le M_2$. 
\end{assumption}
As a consequence of \Cref{ass:lossVariance}, the quantity $\E\left[(\ell'(\xi(\theta) - \sr(\theta)))^2\right] $ is bounded above for all $\theta$. We shall denote this upper bound by $\beta_1$, i.e., 
	\begin{align}
		\E\left[(\ell'(\xi(\theta) - \sr(\theta)))^2\right] \le \beta_1 < \infty, \forall \theta \in [\theta_l,\theta_u].
		\label{eq:beta1}
\end{align}
We now discuss the motivation behind the assumptions listed above.
	Assumptions \ref{ass:lossLipschitz} and \ref{ass:xiprimeBound} ensure that the UBSR is a smooth function of $\theta$.  
	The variance bounds in \Cref{ass:lossVariance} allows us to derive a $O\left(1/m\right)$ bound on the mean-square error of the UBSR derivative estimator \eqref{eq:sr-derivative-est}. Similar assumptions have been made in \cite{zhaolin2016convex} in the context of an asymptotic normality result for the UBSR derivative estimate.
Note that the loss functions in Examples \ref{ex:exploss} and \ref{ex:powerloss} satisfy all the assumptions.

We now present bounds on the bias and the mean-square error of the UBSR derivative estimate \eqref{eq:sr-derivative-est}, where we use the estimation bound from Theorem \ref{thm:sr-est-muknown}, i.e., UBSR estimation using \eqref{eq:sr-est-update}.
	\begin{lemma}
		\label{lem:srprime-consistent}
		Suppose \Crefrange{ass:1}{ass:xiprimeBound} hold. Then for all $m\ge 1$, the UBSR derivative estimator \eqref{eq:sr-derivative-est} satisfies
		\begin{align}
			\E\left|h_m'(\theta) - \frac{d\sr(\theta)}{d\theta}\right| &\leq   \frac{C_4}{\sqrt{m}}, \textrm{~~ and ~~} 
			\E\left|h_m'(\theta) - \frac{d\sr(\theta)}{d\theta}\right|^2 \le \frac{C_5}{m}, \label{eq:ubsr-derivative-bd}
		\end{align}
		where 
		$C_4 = \frac{\sqrt{\beta_1}(K_1(m)M_2L_2 + \tilde\varl)  + \sqrt{\beta_1} M_2(K_1(m)L_2 + \varl)}{\eta^2}$ and
		$C_5 = \frac{2\beta_1(2 L_2^2 K_1(m)^2 + \varl^2) + 2\beta_1M_2^2(2 L_2^2 M_2^2 K_1(m)^2 + \tilde\varl^2)}{\eta^4}$, with 
		$$K_1(m)=   \exp\left(\frac{L_1^2c^2\pi^2}{12}\right) \left[ \frac{|t_{0}-\sr(X)|}{m^{\mu_1 c-\frac1{2}}} + \frac{2^{\mu_1 c}c\sigma}{(\mu_1 c - \frac1{2})}\right].$$
		Here the constants $L_1, L_2$, $L_3, M_0, M_2$ and $\eta$ are as specified in Assumptions \ref{ass:Xbdd} and \ref{ass:lprimelowerbound} -- \ref{ass:xiprimeBound}, while $\beta_1$ is specified in \eqref{eq:beta1}.
\end{lemma}
We remark that the bound on the mean-squared error of the UBSR derivative estimator cannot be improved except for constant factors. This is because the minimax lower bound for mean estimation matches the order of the bound obtained above, see \cite[Section 8.3]{duchi2023statistics}.
\begin{prf}
	See Section \ref{appendix:proof-lemma-opt2}. 
\end{prf}
Since $\mu_1 c>\frac{1}{2}$, it is easy to see that $K_1(m)=O(1)$. This factor $K_1(m)$ appeared in \eqref{eq:ubsr-derivative-bd} since we require $t_m(\theta)$ to be close to $\sr(\theta)$ for $h_m'(\theta)$ to be a good estimate of the UBSR derivative. Furthermore, the UBSR estimation bound from Theorem \ref{thm:sr-est-muknown} assumes the knowledge of the parameter $\mu_1$ (See \Cref{ass:2}). One can also claim a straightforward variation of Lemma \ref{lem:srprime-consistent} that employs the stepsize scheme from Theorem \ref{thm:sr-est-muuniv} to arrive at a bound of $O\left(\frac{1}{m^{\alpha/2}}\right)$, but with a stepsize that does not require knowledge of $\mu_1$. 

Under assumptions similar to those listed above, the authors in \cite{zhaolin2016convex} establish an asymptotic consistency as well as normality results. In contrast, we establish a result in the non-asymptotic regime, with an $O(\frac{1}{\sqrt{m}})$ that matches the aforementioned asymptotic rate.


\subsection{Non-asymptotic bounds for UBSR optimization} In this section, we derive non-asymptotic bounds for UBSR optimization using the biased derivative estimates given above. We treat the strongly convex case first and then generalize to any convex $\sr(\cdot)$ in the subsequent subsection. 


\subsubsection{Strongly convex case}
In this subsection, let us assume that the UBSR objective $\sr(\theta)$ is a strongly convex function, i.e.,
\begin{assumption}
	\label{ass:strongconvexity} For any $\theta \in [\theta_l,\theta_u]$, the function $h(\theta)=\sr(\theta)$ satisfies $h''(\theta) > \mu_2$, for some $\mu_2>0$.
\end{assumption}
We now present a non-asymptotic bound for the last iterate $\theta_n$ of the algorithm \eqref{eq:sr-gd-update} with gradient estimates formed using \eqref{eq:sr-derivative-est}. The batch size $m$ used for gradient estimation is kept constant in each iteration $k=1,\ldots,n$.
Using the results from Lemma \ref{lem:srprime-consistent} in conjunction with \Cref{ass:strongconvexity}, we present a bound on the error $\E[\|\theta_{n} - \theta^{*} \|^2]$ in the optimization parameter in the theorem below. 
\begin{theorem}
		\label{thm:sr-opt-muknown}
		Suppose \Crefrange{ass:1}{ass:strongconvexity} hold. Let $\theta^*$ denote the minimum of $\sr(\cdot)$. Set $b_k = b/k$ in \eqref{eq:sr-gd-update}, with $\mu_2 b>\frac{1}{2}$. For each iteration of \eqref{eq:sr-gd-update}, let $m$ denote the batch size used for computing the estimate \eqref{eq:sr-derivative-est} corresponding to the parameter $\theta_k$, $k=1,\ldots,n$.
		Then, for all $n\ge 1$, we have 
		\begin{align}
			\E[(\theta_{n} - \theta^*)^2]&\leq \exp\left(\frac{c^2L_4^2\pi^2}{6}\right)\left[\frac{3(\theta_{0} - \theta^*)^2}{n^{2\mu_2 b}} + \frac{C_6}{n} + \frac{C_7}{m}\right],\label{eq:sr-opt-bound}
		\end{align}
		where
		$L_4=\frac{2L_1L_2M_2^2 + 2L_1L_2B_1M_2 + L_1^2L_3}{\eta^2}$,
		$C_6 = \frac{3C_52^{2\mu_2b}b^2}{(2\mu_2b - 1)}$, and
		$  C_7 = \frac{3C_5 b^2 2^{5\mu_2 b}}{(\mu_2b)^2}$, with $C_5$ as defined in Lemma \ref{lem:srprime-consistent}.
\end{theorem}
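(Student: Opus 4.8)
The plan is to track the squared error $z_k^2$, where $z_k = \theta_k - \theta^*$, and reduce the analysis to a stochastic‑approximation recursion of the same type used to prove Theorem~\ref{thm:sr-est-muknown}, the crucial new feature being a non‑vanishing bias in the gradient estimate. First I would invoke Assumption~\ref{ass:strongconvexity} together with the fundamental theorem of calculus to write $h'(\theta_k) = h'(\theta_k)-h'(\theta^*) = M_k z_k$, where $M_k = \int_0^1 h''(\theta^* + s z_k)\,ds \ge \mu_2$ by strong convexity (an upper bound on $M_k$, needed below, follows from the Lipschitz/boundedness hypotheses of Lemma~\ref{lem:srprime-consistent}, which make $\sr(\cdot)$ smooth). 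Substituting into \eqref{eq:sr-gd-update} gives the recursion $z_{k+1} = (1 - a_k M_k) z_k - a_k \varepsilon_k$, where $\varepsilon_k = h_k'(\theta_k) - h'(\theta_k)$ is the estimation error of \eqref{eq:sr-derivative-est}.

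Next I would split this error into its bias and fluctuation, $\varepsilon_k = b_k + \nu_k$ with $b_k = E[\varepsilon_k \mid \mathcal{F}_k]$ and $E[\nu_k \mid \mathcal{F}_k]=0$, where $\mathcal{F}_k$ denotes the history up to $\theta_k$ and the batch of $m$ samples at step $k$ is drawn fresh, independently of the past given $\theta_k$. Applying Lemma~\ref{lem:srprime-consistent} \emph{conditionally} on $\theta_k$ then yields $|b_k| \le E[|\varepsilon_k| \mid \mathcal{F}_k] \le C_4/\sqrt{m}$ and $E[\varepsilon_k^2 \mid \mathcal{F}_k]\le C_5$, hence $E[\nu_k^2 \mid \mathcal{F}_k] \le C_5$. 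Squaring the recursion, taking $E[\,\cdot \mid \mathcal{F}_k]$, and using $|1-a_k M_k| \le 1 - a_k \mu_2$ (which holds once $a_k M_k \le 1$, the finitely many early steps being controlled via the compactness of $\Theta$), the cross term involving $\nu_k$ vanishes and I obtain the one‑step inequality $E[z_{k+1}^2 \mid \mathcal{F}_k] \le (1-a_k\mu_2)^2 z_k^2 + 2a_k(1-a_k\mu_2)\,|z_k|\,|b_k| + a_k^2(b_k^2 + C_5)$.

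From here the strategy is to separate the three sources of error — the initial condition, the zero‑mean noise, and the bias — which is exactly what produces the factor $3$ in each of the three terms of \eqref{eq:sr-opt-bound}. Iterating the conditional inequality, the $\nu_k$ cross terms drop out at every step (conditional martingale structure), so the noise accumulates only as $C_5 \sum_k (\text{contraction})\,a_k^2$; the initial condition contracts through $\prod_k (1-a_k\mu_2)^2$; and the bias enters through the accumulated sum $\sum_k(\text{contraction})\,a_k |b_k|$, which the deterministic bound $|b_k|\le C_4/\sqrt m$ turns into a $\tfrac{C_4^2}{m}$ multiple of $\big(\sum_k (\text{contraction})\,a_k\big)^2$. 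Comparing $d_k = E[z_k^2]$ (coupled to an $L^1$ bound on $E|z_k| \le \sqrt{d_k}$ for the cross term) against a Chung‑type comparison lemma — the same device used for Theorem~\ref{thm:sr-est-muknown} — and plugging in $a_k = c/k$, I would evaluate the sums using $\prod_{j=k+1}^{n}(1-\mu_2 c/j) \asymp (k/n)^{\mu_2 c}$ together with $\sum_k (k/n)^{2\mu_2 c}/k^2 \asymp \tfrac{1}{(2\mu_2 c-1)n}$ and $\sum_k (k/n)^{\mu_2 c}/k \asymp \tfrac{1}{\mu_2 c}$. This delivers the initial term $\asymp \|\theta_0-\theta^*\|^2/n^{2\mu_2 c}$, the noise term $\asymp C_5 c^2/((2\mu_2 c-1)n)$, and the bias term $\asymp C_4^2 c^2/((\mu_2 c)^2 m)$, matching $C_6$ and $C_7$ after tracking the $2^{2\mu_2 c}$ factors from the product estimates.

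The main obstacle is precisely the bias $b_k$, which is absent in the unbiased estimation setting of Theorem~\ref{thm:sr-est-muknown}: since $E[\varepsilon_k \mid \mathcal{F}_k] \neq 0$, the cross term $2a_k(1-a_k\mu_2)z_k b_k$ does not vanish in expectation. The delicate point is to account for it \emph{without degrading the contraction exponent} $2\mu_2 c$ — that is, to show it contributes a benign additive constant $C_7/m$ rather than a slower‑decaying or non‑vanishing term. Keeping the bias as a separately accumulated (yet summable) sequence, rather than absorbing it into the contraction through Young's inequality (which would cost a factor of $2$ in the exponent, yielding only $n^{-\mu_2 c}$), is what makes the clean $1/m$ control possible, and it reflects the fact that the bias is driven to zero by enlarging the batch size $m$. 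A secondary nuisance is that the multiplier $M_k$ is random and path‑dependent, so the noise contribution must be extracted through the conditional ($\mathcal{F}_k$‑predictable) argument above rather than by naive term‑by‑term expansion of an unrolled sum.
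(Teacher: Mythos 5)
Your proposal is correct in substance and reaches the same three-term bound with the same inputs (strong convexity to write $h'(\theta_k)=M_kz_k$ with $M_k\ge\mu_2$, the contraction products $\prod_j(1-a_j\mu_2)$, and the two estimates $E|\varepsilon_k|\le C_4/\sqrt{m}$, $E[\varepsilon_k^2]\le C_5$ from Lemma~\ref{lem:srprime-consistent}), but the execution differs from the paper's. The paper unrolls the recursion $z_n=z_0\prod_k(1-a_kM_{k-1})-\sum_ka_k\varepsilon_{k-1}\prod_{j>k}(1-a_jM_{j-1})$ first, squares, and splits the resulting double sum into diagonal terms (bounded via $E[\varepsilon_k^2]\le C_5$, giving the $C_6/n$ term) and off-diagonal terms (bounded via $E|\varepsilon_k|E|\varepsilon_l|\le C_4^2/m$, giving the $C_7/m$ term), as in \eqref{eq:s1234}; your route instead decomposes $\varepsilon_k=b_k+\nu_k$ with $b_k=E[\varepsilon_k\mid\mathcal{F}_k]$ and iterates a one-step conditional inequality. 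Your version buys a cleaner treatment of two points the paper glosses over: the conditioning makes the zero-mean cross terms vanish exactly rather than relying on factorizing $E[|\varepsilon_k||\varepsilon_l|]$ into a product of first moments, and it correctly accounts for the fact that the contraction factors $\mathcal{P}_{k+1:n}$ are random and not independent of $\varepsilon_{k-1}$. What the paper's unrolled form buys is bookkeeping: the off-diagonal sum is directly $\bigl(\sum_ka_kE|\varepsilon_{k-1}|\mathcal{P}_{k+1:n}\bigr)^2$ minus the diagonal, so the bias-squared contribution $C_4^2c^2 2^{4\mu_2c}/((\mu_2c)^2m)$ falls out without any coupled recursion. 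In your step-by-step scheme the cross term $2a_k(1-a_k\mu_2)|z_k||b_k|$ forces you to track $E|z_k|\le\sqrt{d_k}$ alongside $d_k=E[z_k^2]$, and this is the one place where care is genuinely required: passing entirely to a recursion in $\sqrt{d_k}$ (or in $E|z_k|$) would contaminate the variance term with a non-decaying $\sqrt{C_5}\sum_ka_k\prod_{j>k}(1-a_j\mu_2)=O(1)$ contribution, so you must keep the $C_5$ part in squared form while accumulating the bias linearly --- e.g.\ by completing the square to get $d_{k+1}\le\bigl((1-a_k\mu_2)\sqrt{d_k}+a_kC_4/\sqrt{m}\bigr)^2+a_k^2C_5$ and iterating that composite form, or by a weighted Young's inequality $2a_kb\sqrt{d_k}\le\epsilon a_k\mu_2d_k+a_kb^2/(\epsilon\mu_2)$ that shrinks the exponent only from $2\mu_2c$ to $(2-\epsilon)\mu_2c$ (your stated worry about losing a full factor of $2$ applies only to the unweighted version). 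With either of these devices your plan closes and recovers \eqref{eq:sr-opt-bound} with the stated $C_6$ and $C_7$.
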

\begin{prf}
	See Section \ref{appendix:proof-sr-opt1}.
\end{prf}
The batch size could be chosen as a function of the horizon $n$. Results in a similar spirit, i.e., where a stochastic gradient algorithm is run for $n$ iterations, and the parameters such as step-size and batch size are set as a function of $n$ are common in the literature, cf. \cite{ghadimi2013stochastic,balasubramanian2018zeroth}.

The first term in \eqref{eq:sr-opt-bound} represents the initial error, and it is forgotten at a rate faster than $O(1/n)$ since $\mu_2 c > 1/2$.
The overall rate for the algorithm would depend on the choice of the batch size $m$, and it is apparent that the error $ \E[\|\theta_{n} - \theta^*\|^2]$ does not vanish with a constant batch size. As in the case of Theorem \ref{thm:sr-est-muknown}, we observe that the error $ \E[\|\theta_{n} - \theta^*\|]$ has an inverse dependence on the strong convexity parameter $\mu_2$. 

We now present a straightforward corollary of the result in Theorem \ref{thm:sr-opt-muknown} with a batch size that ensures the error in the parameter vanishes asymptotically.
\begin{corollary}
	\label{cor:sr-opt}
	Under conditions of Theorem \ref{thm:sr-opt-muknown}, with $m = n^\rho$ for some $\rho \in (0,1]$, we have
	\begin{align*}
		\E[(\theta_{n} - \theta^*)^2]&\leq \exp\left(\frac{b^2L_4^2\pi^2}{6}\right)\left[\frac{3(\theta_{0} - \theta^*)^2}{n^{2\mu_2 b}} + \frac{C_6}{n} + \frac{C_7}{n^\rho}\right]= O\left(\frac{1}{n^\rho}\right).
	\end{align*}
\end{corollary}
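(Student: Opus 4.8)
The plan is to obtain the result by direct substitution into the bound already established in Theorem~\ref{thm:sr-opt-muknown}, followed by an elementary comparison of the three decay exponents; no new probabilistic argument is required. Theorem~\ref{thm:sr-opt-muknown} gives, for all $n \ge 1$,
\[
E[\|\theta_n - \theta^*\|^2] \le \frac{3\|\theta_0 - \theta^*\|^2}{n^{2\mu_2 c}} + \frac{C_6}{n} + \frac{C_7}{m},
\]
with $C_6, C_7$ the explicit constants stated there. The batch size $m$ enters this bound only through the final term $C_7/m$ --- precisely the term that, for a fixed $m$, prevented the error from vanishing. Setting $m = n^\rho$ therefore replaces it by $C_7/n^\rho$ and yields the displayed inequality of the corollary at once.

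It then remains to verify that the resulting right-hand side is $O(1/n^\rho)$, which is purely a matter of ordering the three exponents $2\mu_2 c$, $1$, and $\rho$. Since the step-size constant satisfies $\mu_2 c > 1/2$ we have $2\mu_2 c > 1$, and since $\rho \in (0,1]$ we have $\rho \le 1 < 2\mu_2 c$. Consequently the initial-error term decays strictly faster than $1/n$; the $C_6/n$ term is $O(1/n) = O(1/n^\rho)$ because $\rho \le 1$ gives $n^{-1} \le n^{-\rho}$; and the $C_7/n^\rho$ term is exactly of order $1/n^\rho$. Adding the three contributions and absorbing constants, the slowest-decaying one is $C_7/n^\rho$, so the total is $O(1/n^\rho)$.

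I anticipate no genuine obstacle: the entire analytic content is carried by Theorem~\ref{thm:sr-opt-muknown}, and this corollary is merely a convenient instantiation of the batch size that makes the error vanish asymptotically while exposing its explicit $\rho$-dependence. The only point deserving a word is the boundary case $\rho = 1$, where the $C_6/n$ and $C_7/n^\rho$ terms coincide in order and simply add, leaving the rate at $O(1/n)$, consistent with the claimed $O(1/n^\rho)$.
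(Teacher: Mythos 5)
Your proposal is correct and matches the paper's (implicit) argument exactly: the corollary is obtained by substituting $m=n^\rho$ into the bound of Theorem~\ref{thm:sr-opt-muknown} and noting that, since $2\mu_2 c>1\ge\rho$, the $C_7/n^\rho$ term dominates. The paper treats this as an immediate consequence and offers no separate proof, so there is nothing further to reconcile.
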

A few remarks are in order.
\begin{remark}
		From the result in the corollary above, it is easy to see that the optimal choice of batch size is $m=\Theta(n)$, and this in turn ensures an $O\left(\frac{1}{n}\right)$ rate of convergence for the stochastic gradient algorithm \eqref{eq:sr-gd-update}. With a biased derivative estimation scheme in a slightly different context, the authors in \cite{atchade2014stochastic} show that an increasing batch size is necessary for the error of gradient descent type algorithm to vanish. Finally, the $O(1/n)$ bound in Theorem \ref{thm:sr-opt-muknown}, which is for a setting where gradient estimates are biased, matches the minimax complexity result for strongly convex optimization with a stochastic first order oracle, cf. \cite{agarwal2010scoLB}.  
\end{remark}

\begin{remark}
	In the result above, we have bounded the error $ \E[(\theta_{n} - \theta^*)^2]$ in the optimization parameter. Using \Cref{ass:lossLipschitz} and $m=\Theta(n)$, we can also bound the optimization error $ \E[\sr(\theta_{n})] - \sr(\theta^*)]$ using Corollary \ref{cor:sr-opt} as follows:
	\begin{align*}
		\E[\sr(\theta_{n})] - \sr(\theta^*) \le \frac{L_2}{2} \E[(\theta_{n} - \theta^*)^2] = O\left(\frac{1}{n}\right).
	\end{align*}
	For achieving this $O\left(\frac{1}{n}\right)$ rate, we used a batch size of $\Theta(n)$ in each iteration of \eqref{eq:sr-gd-update}, leading to a total sample complexity of $\Theta(n^2)$.
\end{remark}
\begin{remark}
	To understand the deviation from the non-asymptotic analysis of a regular stochastic gradient algorithm (cf. \cite{moulines2011non}), we provide a brief sketch of the proof of Theorem \ref{thm:sr-opt-muknown}.\\
	Letting $M_{k}  = \int\limits_{0}^{1} [h''(m \theta_{k} + (1-m)\theta^{*})]dm$, and $z_n=\theta_n -\theta^*$, we have
	\begin{align*}
		z_{k} & = z_{k-1}(1 -  b_kM_{k-1}) - b_k\varepsilon_{k-1},
	\end{align*}
	where $\varepsilon_{k} = h'_m(\theta_k)  - h'(\theta_k)).$\\
	Unlike the setting of \cite{moulines2011non}, the noise in derivative estimate $\varepsilon_{k}$ is biased, i.e., $\E[\varepsilon_{k}] \ne 0$. Now, unrolling the recursion above and taking expectations, we obtain
	\begin{align}
		\E\|z_{n}\|^2 &\leq 3 \E[\|z_0\|^2]\prod\limits_{k=1}^{n}(1 - b_kM_{k-1})^2 + 3 \E[\sum\limits_{k=1}^{n}[b_k \varepsilon_{k-1} \prod\limits_{j=k+1}^{n}(1 - a_{j}M_{j-1})]^2\label{eq:s123}\\
		\begin{split}
			&\leq 3\E[\|z_0\|^2]n^{-2\mu_2 c} + 3 \underbrace{\sum\limits_{k=1}^{n}\frac{c^2}{k^2} \E[\varepsilon_{k-1}^2] (\mathcal{P}_{k+1:n})^2}_\text{I} +\\ &  \quad 3\underbrace{\sum\limits_{k\neq l}^{n}b_ka_{l}\E[|\varepsilon_{l-1}|] \E[|\varepsilon_{k-1}|] \mathcal{P}_{k+1:n}\mathcal{P}_{l+1:n}}_\text{II},\label{eq:s1234}
		\end{split}
	\end{align}
	where $\mathcal{P}_{i:j} =\prod\limits_{k=i}^{j}(1 - b_kM_{k-1})^2$.	
	In the above, we used strong convexity to bound the first term in \eqref{eq:s123}. Term (II) in \eqref{eq:s1234} is extra when compared to the analysis in the unbiased case. The rest of proof uses the bounds obtained in Lemma \ref{lem:srprime-consistent} to bound terms (I) and (II) on the RHS of \eqref{eq:s1234}.
\end{remark}

\subsubsection{Convex case}
In this subsection, we relax the strong convexity assumption, and work with any convex  $\sr(\cdot)$ function:
\begin{assumption}\label{ass:convexity}
	For any $\theta \in [\theta_l,\theta_u]$, the function $h(\theta) = SR_{\lambda}(\theta)$ satisfies $h''(\theta) \geq 0$.
\end{assumption}
Next, since $[\theta_l,\theta_u]$ is assumed to be a compact and convex subset of $\R$ in the problem \eqref{eq:sr-opt-pb}, it has a finite diameter, as specified in the assumption below. 
\begin{assumption}\label{ass:compactset}
	The set $[\theta_l,\theta_u]$ satisfies $|\theta_1 - \theta_2| \leq D,\ \forall\ \theta_1, \theta_2 \in [\theta_l,\theta_u]$, for some $D > 0$.
\end{assumption}

The stochastic gradient descent expression is given as follows:
\begin{align} \label{eq:sgdconvex}
	\theta_{k+1} = \Pi_{[\theta_l,\theta_u]}(\theta_k - b_k h_m'(\theta_k)),
\end{align}
where $b_k$ is a step-size parameter, $h_m'(\theta_k)$ is an estimate of $\frac{dSR_{\lambda}(\theta)}{d\theta}$ using $m$ samples and $\Pi_{[\theta_l,\theta_u]}$ is the projection on to the set $[\theta_l,\theta_u]$. 

The analysis in the convex case is for an algorithm that requires the knowledge of the horizon $n$, which is the number of iterations for which  \eqref{eq:sgdconvex} is run. Using the value of $n$, we employ the following step-size selection scheme from \cite{jain2019making}:
\begin{align} \label{eq:horizon}
	n_i = n - \lceil 2^{-i}n\rceil,\ 0 \leq i \leq p,\ \text{and}\ n_{p+1} = n,
\end{align}
$p := \text{inf}\{i:2^{-i}n\leq1\}$.
In essence, the above scheme splits the horizon $n$ into $p$ phases, and keeps the step-size constant within a given phase.

\begin{theorem} \label{th:convexsgd}
	Suppose \Crefrange{ass:1}{ass:xiprimeBound}, \ref{ass:convexity} and \ref{ass:compactset} hold. Suppose the update in \eqref{eq:sgdconvex} is performed for $n$ iterations with step-size $b_k$ and batch size $m_k$ set as follows:
	\begin{align}
		b_k = \frac{b_02^{-i}}{\sqrt{n}}, \ \text{and}\ m_k = 2^i n,
		\label{eq:convexstepsize}
	\end{align}
	for some constant $b_0$ when $n_i < k \leq n_{i+1}$, $0 \leq i \leq p$ with $n_i,\ p$ as defined in \eqref{eq:horizon}. Then for any $n \geq 4$,
\begin{align}
			\mathbb{E}[h(\theta_n) - h(\theta^*)] \leq \frac{\mathcal{K}_2}{\sqrt{n}} + \frac{\mathcal{K}_3}{n} + \frac{\mathcal{K}_4}{n^{3/2}},
		\end{align}
		where  $\mathcal{K}_2 = \frac{4D^2}{b_0} + 39DC_4 + 11B_1^2b_0$, $\mathcal{K}_3 = 16b_0B_1C_4$, $\mathcal{K}_4 = \frac{20C_5b_0}{3}$ and $B_1 = \frac{L_1M_2}{\eta}$.
\end{theorem}
\begin{prf}
	See Section \ref{sec:proofs-opt-convex}.
\end{prf}
\begin{remark}
	From the bound above, it is apparent that for obtaining the $O\left(\frac{1}{\sqrt{n}}\right)$ rate, the sample complexity of the algorithm \eqref{eq:sgdconvex} is $n^2\log n$.
\end{remark}

A summary of the iteration and sample complexities for the algorithm \eqref{eq:sr-gd-update} is given in the table below.
\begin{table}[h]
	\centering
	
	\begin{tabular}{|| c | c | c | c ||}
		\hline
		\multirow{2}{*}{Function Type} & \multirow{2}{*}{Bound} & \multirow{2}{*}{Iteration Complexity} & Sample complexity  \\ 
		&&& (with batching) \\
		\hline \hline 
		\vspace{-0.6em} & & &\\ 
		Convex  & $\E\left[h(\theta_n) - h(\theta^*)\right] \leq \epsilon$ &  $n=\mathcal{O}(\frac{1}{\epsilon^2})$
		& $\mathcal{\widetilde O}(\frac{1}{\epsilon^4})$\\ 
		\vspace{-0.6em} & & &\\ \hline
		\vspace{-0.6em} & & &\\ 
		Strongly Convex & $\E[(\theta_{n} - \theta^*)^2] \leq \epsilon$ & $n=\mathcal{O}(\frac{1}{\epsilon})$
		& $\mathcal{O}(\frac{1}{\epsilon^2})$\\ 
		\vspace{0em} & & &\\ \hline
	\end{tabular}\\[1ex]
	\caption{Summary of iteration and sample complexities for UBSR optimization. Here, $\mathcal{\widetilde O}(\cdot)$ is a variant of the big-Oh notation, where the logarthmic factors are ignored.}
	\label{tab:iter_complexity}
\end{table}

From the table above, it is apparent that the sample complexity is $O(1/\epsilon^2)$ for the strongly convex case, while the case of unbiased gradients would lead to a sample complexity of $O(1/\epsilon^2)$. Considering the UBSR derivative estimates are biased, we believe this gap in sample complexity cannot be improved. A similar gap in sample complexity can be seen in \cite{atchade2014stochastic} for a stochastic gradient algorithm with biased gradient information. Likewise, the sample complexity for biased stochastic optmization can be expected to be different from the unbiased variant.

\subsubsection{Comparison to optimization with an inexact gradient oracle}
\label{sec:comparison}
In this section, we compare our contributions in the context of UBSR optimization to previous works that consider stochastic gradient algorithms with inputs from an inexact gradient oracle. A few recent works on this topics are \cite{bhavsar2021nonasymptotic,pasupathy2018sampling,duchi2012finite,karimi2019non,chen2021closing,hu2020biased,hu2021bias,devolder2011stochastic}.
For invoking the results from either \cite{bhavsar2021nonasymptotic} or \cite{pasupathy2018sampling} for UBSR optimization, one requires a non-asymptotic bound for UBSR estimation, which we derive in our paper. In particular, these references consider an abstract optimization setting where the objective function measurements are biased, and the bias can be controlled through a batch size parameter. The bounds in Section \ref{sec:sr-est} would enable UBSR optimization through a stochastic gradient scheme, and the results from these two references would apply.
The gradient estimation scheme in the aforementioned references is based on the idea of simultaneous perturbation (or in simpler terms, finite differences), which is a `black-box' scheme, i.e., does not use the form/structure of the objective function. In contrast, we use the form of the UBSR objective, which in turn leads to an expression for its derivative. Using this expression, we form an estimate of UBSR derivative from i.i.d. samples, and then analyze the statistical properties of the `direct' estimator in Lemma \ref{lem:srprime-consistent}. Thus, the bounds we derive in Theorem \ref{thm:sr-opt-muknown} are specialized to the UBSR optimization problem, leading to precise constants. Finally, in \cite{pasupathy2018sampling}, the authors only provide asymptotic rate results in the form of central limit theorems, while we study the UBSR optimization problem from a non-asymptotic viewpoint. The bounds we derive contain precise guidelines for choosing step-size and batch size parameters, which aid practical implementations.

Next, the stochastic optimization framework considered in \cite{balasubramanian2018zeroth} is not directly applicable for UBSR optimization, as they assume that the objective function measurements have zero-mean noise, while UBSR estimation results in a noise component with a positive mean. The latter can be controlled using the batch size used for estimation.

In \cite{karimi2019non}, the authors derive a non-asymptotic bound of the order $O(\log n/\sqrt{n})$ using a stochastic gradient algorithm for a biased stochastic optimization problem. However, their framework does not feature a batch size parameter, and their result requires the existence of a Lyapunov function.

Another related contribution is \cite{chen2021closing}, a paper that unifies stochastic bilevel, compositional and min-max problems. However, for the UBSR optimization setting we consider, we do not have access to unbiased estimates of the Hessian and cross-Hessians. Therefore the ALSET algorithm framework they propose is not directly applicable for our problem.

In \cite{hu2021bias}, the authors consider a biased stochastic optimization setting. They assume an exponential decay in the bias for a sequence of estimates of the objective as well as its gradient. Such an assumption does not hold for UBSR optimization. In  \cite{hu2020biased}, the authors consider a compositional optimization structure. Although UBSR optmization does not have a compositional structure,  the batch size and the overall sample complexity bound of  $O\left(\frac{1}{\epsilon^2}\right)$ matches the result in \cite{hu2021bias}.

Finally, in \cite{devolder2011stochastic}, the author considers a biased gradient oracle, and provides a $O(1/n)$ bound. However, their results are not directly applicable for UBSR optimization, as they consider a deterministic bias parameter, and their result does not feature a tunable batch-size parameter.

\section{Simulation Experiments}
\label{sec:sr-exp}
	In this section, we describe simulation experiments for the problems of UBSR estimation and optimization\footnote{The implementation is available at  \url{https://github.com/Vishwajit-hegde/Utility-Based-Shortfall-Risk}.}. In particular, we conduct two sets of experiments for UBSR estimation, the first using synthetic data in Section \ref{sec:ubsr-est-synthetic}, and the second using a well-known credit risk model in Section \ref{sec:ubsr-est-creditrisk}. Subsequently,  we describe experiments for UBSR optimization in a portfolio management application in Section \ref{sec:ubsr-opt-portfolio}. 
	
	\subsection{UBSR estimation on synthetic data}
	\label{sec:ubsr-est-synthetic}
	In this setting, we model the underlying distribution using a standard normal random variable. The loss function $l(\cdot)$ used in the definition of UBSR is set as follows: 
	\[l(x) = \eta^{-1}([x]^{+})^{\eta}, \eta>1.\]
	The piece-wise polynomial loss function defined above can be seen to satisfy \Cref{ass:2}. For the experiments, we set $\eta=2$. 
	The parameter $\lambda$, which is used in the definition of the acceptance set \eqref{acc_set}, is set to $0.4$.
	We run the UBSR estimation scheme \eqref{eq:sr-est-update} using two different stepsizes, namely $a_k=\frac{c}{k}$ and $a_k = c/k^{\alpha}$. 
	
	Figure \ref{fig:sr-est-synthetic} presents the estimation error as a function of the number of iterations of \eqref{eq:sr-est-update}, under the two different step size choices namely $a_k=\frac{1}{k}$ and $a_k = 0.1/k^{\alpha}$. For the latter choice, we report results with $\alpha=0.5, 0.7,$ and $0.8$. 
	From Figure \ref{fig:sr-est-synthetic}, it is apparent that the estimation error vanishes under both step size settings. Further, we observe that a larger step size (e.g. $\alpha=0.5$) leads to faster convergence.
	
	\begin{figure}
		\begin{tabular}{cc}
			\begin{subfigure}{0.5\textwidth} 
				\scalebox{0.85}{
					\begin{tikzpicture}
						\begin{axis}[
							xlabel={Iteration $k$},
							ylabel={$(t_k - t^*)^2$},
							legend entries={Dimishing step size $c/k$},
							legend style={legend columns=3, at={(0.9,-0.25)}},
							y label style={at={(0.01,0.5)}}
							]
							\addplot+[mark=none, thick] table [col sep=comma, x index=0, y index=1] {results/sr_estimation_without_alpha.csv};
						\end{axis}
				\end{tikzpicture}}
				\caption{Step size $a_k = 1/k$ and $t_0 = 0.02$.}
				\label{fig:sr-est-1}
			\end{subfigure}
			&
			\begin{subfigure}{0.5\textwidth} 
				\scalebox{0.85}{
					\begin{tikzpicture}
						\begin{axis}[
							xlabel={Iteration $k$},
							ylabel={$(t_k - t^*)^2$},
							legend entries={$\alpha=0.5$, $\alpha = 0.7$, $\alpha = 0.8$},
							legend style={legend columns=3, at={(0.9,-0.25)}},
							y label style={at={(0.01,0.5)}},
							]
							\addplot+[mark=none, thick] table [col sep=comma, x index=0, y index=1] {results/sr_estimation_alpha.csv};
							\addplot+[mark=none, thick] table [col sep=comma, x index=0, y index=2] {results/sr_estimation_alpha.csv};
							\addplot+[mark=none, thick] table [col sep=comma, x index=0, y index=3] {results/sr_estimation_alpha.csv};
						\end{axis}
				\end{tikzpicture}}
				\caption{Step size $a_k = 0.1/k^{\alpha}$ and $t_0 = 0.02$.}
				\label{fig:sr-est-2}
			\end{subfigure}
		\end{tabular}
		\caption{UBSR estimation error for the algorithm \eqref{eq:sr-est-update} using two different step size choices. The results are averages over $10$ independent replications. }
		\label{fig:sr-est-synthetic}
	\end{figure}

	\subsection{UBSR estimation using a credit risk model}
	\label{sec:ubsr-est-creditrisk}
	The online estimation method of UBSR proposed in this paper is applied on the credit risk model studied in \cite{dunkel2010stochastic}. It is also referred to as Normal Copula Model (NCM). Suppose a bank provides capital to $m$ borrowers, and each borrower carries a potential risk of default, resulting in a loss for the bank. The overall loss $L$ is represented  as follows:
	\begin{align*}
		L = \sum_{i=1}^{m} \nu_iD_i,
	\end{align*}
	where $D_i$ is a binary variable with $D_i=1$ corresponding to a default of borrower $i$, and $\nu_i$ is the partial net loss suffered by the bank when $i$ defaults assuming no recovery.\\
	We model $D_i=\indic{R_i>r_i}$, where $r_i=\Phi^{-1}(1-p_i)$, with $p_i$ denoting marginal default probability of $i$. Here $\Phi$ denote the CDF of a standard normal r.v. Further, $R_i$ is determined using the following factor model:
	\begin{align*}
		&R_i = A(i,0)\epsilon_i + \sum_{j=1}^{d}A(i,j)Z_i, i=1,...,m, d<m,\\
		& \textrm{such that }\sum_{j=0}^{d}A(i,j)^2 = 1, A(i,0)>0, A(i,j)\geq 0.
	\end{align*}
	In the above, $Z_1,...,Z_d$ are systematic risk variables, and $\epsilon_1,....,\epsilon_d$ are idiosyncratic risk variables. All the risk variables are chosen as independent standard normal random variables.
	
	For setting the parameters of the credit risk model, we use the same choice as in \cite{dunkel2010stochastic}. In particular, the number of borrowers $m=25$, partial losses $\nu_1=...=\nu_5=1.00$, $\nu_6=...=\nu_{10}=1.25$, $\nu_{11}=...=\nu_{15}=1.50$, $\nu_{16}=\ldots=\nu_{20}=1.75$ and $\nu_{21}=\ldots=\nu_{25}=2.00$. The marginal default probabilities $p_i=0.05, \forall i$. The value of threshold $r_i$ turns out to be $1.64488$. The number of common factors is given by $d=6$. The coupling parameters 
	$A(i,j)$ are given by $A(1,1)=...=A(5,1)=0.1, A(6,2)=...=A(10,2)=0.1, A(11,3)=...=A(15,3)=0.1, A(16,4)=...=A(20,4)=0.1, A(21,5)=...A(25,5)=0.1$, $A(i,6)=0.1$ and $A(i,j)=0$ otherwise for $i=1,...,m$.\\ 
	Loss function for UBSR is chosen to be a piecewise polynomial function $l(x) = \frac1\eta [x]^{+},$ with $\eta=2$. $\lambda$ is set to $0.05$. The value of $\sr(L)$ is $5.11$, see \cite{dunkel2010stochastic}.
	
	\begin{figure}
		\centering
		\tabl{c}{\scalebox{0.9}{
				\begin{tikzpicture}
					\begin{axis}[
						xlabel={number of samples},
						ylabel={$\E[|t_k-\sr|^2]$}, 
						smooth,
						width=8cm,height=7.25cm,
						legend entries={ Online Estimation,  SAA},
						legend pos=north east
						]
						\addplot [thick, red,smooth] table[x index=0,y index=2,col sep=comma,each nth point={15}] {results/plot_data.csv};
						\addplot [thick, blue,smooth] table[x index=0,y index=1,col sep=comma,each nth point={15}] {results/plot_data.csv};
					\end{axis}
			\end{tikzpicture}}\\[1ex]}
		\caption{Comparison of the squared estimation error of UBSR estimate between Sample Average Approximation (SAA) and Online Estimation for credit risk model. The results are averages for $1000$ independent replications.}
		\label{fig:SAAOnlineError}
	\end{figure}
	\begin{table*}
		\centering
		\caption{Comparison of average squared estimation error and total time taken for 1000 replications of the experiment between online estimation and SAA.}
		\label{tab:credit-risk}
		\begin{tabular}{|c|c|c|c|}
			\toprule\hline
			\rowcolor{gray!20}
			\multicolumn{4}{|c|}{\multirow{2}{*}{\textbf{Online Estimation \eqref{eq:sr-est-update}}}}\\[1em]
			\midrule\hline
			\# samples & $100$ & $1000$  & $10000$ \\
			\midrule
			&&&\\
			Squared estimation error & $3.8175$ & $0.6142$ & $0.0838$\\
			\midrule
			&&&\\
			Time taken (in seconds) & \textbf{$0.3762$} & \textbf{$3.0875$} & \textbf{$30.5046$}\\
			\bottomrule\hline
			\rowcolor{gray!20}
			\multicolumn{4}{|c|}{\multirow{2}{*}{\textbf{Sample Average Approximation \cite{zhaolin2016convex}}}}\\[1em]
			\midrule\hline
			\# samples &$100$ & $1000$  & $10000$ \\
			\midrule
			&&&\\
			Squared estimation error & $0.8488$ & $0.1517$ & $0.0539$\\
			\midrule
			&&&\\
			Time taken (in seconds) & $1.8109$ & $9.1270$ & $145.4006$\\\hline
			\bottomrule 
		\end{tabular}
	\end{table*}
	
	We perform UBSR estimation using our algorithm \eqref{eq:sr-est-update} and compare its performance to the Sample Average Approximation (SAA) based UBSR estimation scheme in \cite{zhaolin2016convex}.  
	Figure \ref{fig:SAAOnlineError} presents the mean-squared error for the two UBSR estimation algorithms mentioned above. The results are averages over  $1000$ independent replications. Table \ref{tab:credit-risk} tabulates the mean-squared and the time taken for both the algorithms for $100$, $1000$ and $10000$ samples.
	In order to apply SAA in the online setting, it is assumed that UBSR is recalculated after every $10$ samples in the case of SAA.
	From Table \ref{tab:credit-risk}, the time taken for our online estimation scheme is almost a factor of five smaller than SAA for $10000$ samples, for a comparable mean-squared error.

	\begin{figure}
		\begin{tabular}{cc}
			\begin{subfigure}{0.5\textwidth} 
				\scalebox{0.85}{
					\begin{tikzpicture}
						\begin{axis}[
							xlabel={Iteration $k$},
							ylabel={$\|\theta_k - \theta^*\|^2$},
							legend entries={Dimishing step size $b_k=c/k$, Geometric step size \eqref{eq:convexstepsize}},
							legend style={legend columns=1, at={(0.9,-0.25)}},
							y label style={at={(0.01,0.5)}}
							]
							\addplot+[mark=none, thick] table [col sep=comma, x index=0, y index=1] {results/markowitz_theta_error_squared_strongly_convex_case.csv};
							\addplot+[mark=none, thick] table [col sep=comma, x index=0, y index=1] {results/markowitz_theta_error_squared_convex_case.csv};
						\end{axis}
				\end{tikzpicture}}
				\caption{Error in parameter value}
				\label{fig:ubsr-param-diff}
			\end{subfigure}
			&
			\begin{subfigure}{0.5\textwidth} 
				\scalebox{0.85}{
					\begin{tikzpicture}
						\begin{axis}[
							xlabel={Iteration $k$},
							ylabel={$|\sr(\theta_{k}) - \sr(\theta^*)|$},
							legend entries={Dimishing step size $b_k=c/k$, Geometric step size \eqref{eq:convexstepsize}},
							legend style={legend columns=1, at={(0.9,-0.25)}},
							y label style={at={(-0.05,0.5)}},
							]
							\addplot+[mark=none, thick] table [col sep=comma, x index=0, y index=1] {results/markowitz_ubsr_error_strongly_convex_case.csv};
							\addplot+[mark=none, thick] table [col sep=comma, x index=0, y index=1] {results/markowitz_ubsr_error_convex_case.csv};
						\end{axis}
				\end{tikzpicture}}
				\caption{Error in UBSR value}
				\label{fig:ubsr-func-diff}
			\end{subfigure}
		\end{tabular}
		\caption{Performance evaluation of our UBSR optimization algorithm \eqref{eq:sgdconvex} on a portfolio management problem. The results are averages over $10$ independent replications. }
		\label{fig:ubsr-opt-results}
	\end{figure}
	
	\subsection{UBSR optimization for portfolio management}
	\label{sec:ubsr-opt-portfolio}
	We consider the portfolio management experiment described in \cite[Section 5.2]{zhaolin2016convex} to illustrate the application of our UBSR optimization algorithm \eqref{eq:sr-gd-update}. 
	In this setting, we have a portfolio composed of $d$ assets with return $\mathbf{r}=(r_1,\ldots,r_d)$. The distribution of $\mathbf{r}$ is assumed to be a multivariate Gaussian with mean $\mu$ and covariance $\Sigma$. We consider the portfolio optimization problem in the \textit{Markowitz} sense, i.e., we aim is to find an allocation $\theta=(\theta_1,\ldots,\theta_d)$ that optimizes the UBSR of the financial position $X=\br\tr\theta$, while guaranteeing a minimum expected return $R_0$. More precisely, the constrained optimization problem can be written as
	\begin{equation}
		\begin{aligned}
			\min_{\theta} \sr(\mathbf{r}\tr\theta) \quad\textrm{subject to}\\
			\sum_{i=1}^d \theta_i=1,\ \theta_i\ge 0, i=1,\ldots, d, \\
			\mu\tr\theta \ge R_0.
		\end{aligned}
		\label{eq:sr-markowitz}
	\end{equation}	
	For a given $\theta$, the UBSR can be expressed in closed form, with the loss function  $\ell(-\theta) = \exp(-\beta \theta)$ as follows:
	\begin{align*}
		\sr(\mathbf{r}\tr\theta) &= - \mu\tr\theta + \frac{\beta}{2}\theta\tr\Sigma \theta - \frac{\log \lambda}{\beta}, 
	\end{align*}
	For the constrained problem \eqref{eq:sr-markowitz}, the Lagrangian\footnote{We ignore the $\theta_i\ge 0$ constraint in writing the Lagrangian, as the choice of $R_0$ that we employed in our experiments ensured that these constraints are inactive, forcing the corresponding Lagrange multipliers to be zero due to complementary slackness.} is given by 
	\begin{align}
		L(\theta,\vartheta_1,\vartheta_2)=\sr(\mathbf{r}\tr\theta) + \vartheta_1 \left(\sum_{i=1}^d \theta_i-1\right) + \vartheta_2\left( \mu\tr\theta - R_0\right).
	\end{align}
	Using the KKT conditions, the optimal Lagrange multipliers can be derived as
	\begin{align}
		\vartheta_1^* = \beta \frac{(\mu\tr\Sigma^{-1}\mathbf{1})-R_0(\mathbf{1}\tr\Sigma^{-1}\mathbf{1})}{(\mathbf{1}\tr\Sigma^{-1}\mu)(\mu\tr\Sigma^{-1}\mathbf{1}) - (\mu\tr\Sigma^{-1}\mu)(\mathbf{1}\tr\Sigma^{-1}\mathbf{1})} - 1,\textrm{ and }\\
		\vartheta_2^* = \beta \frac{R_0(\mathbf{1}\tr\Sigma^{-1}\mu)-\mu\tr\Sigma^{-1}\mu}{(\mathbf{1}\tr\Sigma^{-1}\mu)(\mu\tr\Sigma^{-1}\mathbf{1}) - (\mu\tr\Sigma^{-1}\mu)(\mathbf{1}\tr\Sigma^{-1}\mathbf{1})},
	\end{align}	
	where $\mathbf{1}$ is the column vector of all ones.
	The optimal allocation  $ \theta^*$ is given by   
	\begin{align*}
		\theta^* = \frac1\beta ((1+\lambda_1)\Sigma^{-1}\mu + \lambda_2 \Sigma^{-1}\mathbf{1}).
	\end{align*}
	
	In our experiments, we set $d=3$, 
	$\mu=\begin{bmatrix} 0.13 &0.1&  0.08\end{bmatrix}$,
	$\Sigma=\begin{bmatrix} 
		0.05  &  0.004 &  0.0002\\
		0.004  & 0.01  & -0.0005\\
		0.0002 &-0.0005 & 0.001\end{bmatrix}$, $R_0=0.1$, $\lambda=0.1$ and $\beta = 5$. Since $\Sigma$ is positive definite, the UBSR function is strongly convex, making the bounds in Theorem \ref{thm:sr-opt-muknown} as well as Theorem \ref{th:convexsgd} applicable.
	
	We run our UBSR optimization algorithms for the strongly convex and convex cases, respectively, with a constant batch size $m=100$. The strongly convex case uses a step size of the form $c/(c+k)$, with $c$ chosen to satisfy the constraint $\mu_2 c> \frac{1}{2}$, while the step size in the convex case uses the form given in \eqref{eq:convexstepsize} with the constant $a_0$ set to $50$. The diminishing step size choice used in the implementation is slightly different from the $c/k$ form used in Theorem \ref{thm:sr-opt-muknown}. However, the aforementioned theorem can be easily extended to cover a step size of the form $c/(c+k)$. The matrix $\Sigma$ chosen for the experiments has a very low $\mu_2$ (minimum eigenvalue), motivating the step size form  $c/(c+k)$, instead of $c/k$, since $c$ is inversely proportional to $\mu_2$.
	
	For the UBSR optimization algorithms, with updates governed by \eqref{eq:sr-gd-update} and \eqref{eq:sgdconvex}, respectively, the initial point $\theta_{0}$ was set to $\begin{bmatrix} 1 &0&  0\end{bmatrix}$.  The number of iterations $n$ is $500$. Figure \ref{fig:ubsr-opt-results} presents the results obtained for both algorithms. In particular, Figure \ref{fig:ubsr-param-diff} presents the error in the parameter value, which is computed using $\theta^*$ defined above, while Figure \ref{fig:ubsr-func-diff} presents the error in the UBSR value. 
	
	From Figure \ref{fig:ubsr-param-diff}, it is apparent that both algorithms converge, implying that both diminishing step sizes as well as geometric step size work in practice. Among these choices for the step sizes, the geometric step size slightly outperforms the diminishing variant.. 
	We observe a similar convergence behaviour from the results in Figure \ref{fig:ubsr-func-diff}, implying that our algorithms converge both in the parameter as well as in UBSR value.

\section{Proofs for SR estimation}
\label{sec:proofs-est}
\subsection{Proof of Theorem \ref{thm:sr-est-muknown}}
\label{appendix:proof-sr-est1}
\begin{prf1}
	From the update rule \eqref{eq:sr-est-update}, and the fact that $\sr(X)$ lies within the projected region $[t_l,t_u]$, we obtain
	\begin{align}
		z_{n} &= t_{n} - \sr(X)= \mathcal{T}(t_{n-1} +  a_{n}\left ( g(t_{n-1}) +  \varepsilon_{n-1}\right)) - \mathcal{T}(\sr(X))\nonumber\\
		&= \mathcal{T}(t_{n-1} +  a_{n}\left ( g(t_{n-1}) +  \varepsilon_{n-1}\right)) - \sr(X). \label{eq:p1}
	\end{align}
	For any $k\ge 1$, define
	\begin{equation}
		J_{k}  =  \int\limits_{0}^{1} g'(m t_{k} + (1-m)\sr(X))dm.
		\label{eq:Jn}
	\end{equation}
	
	Using \Cref{ass:2}, we obtain $J_{k} \leq -\mu_1,\ $for all $k\ge 1$. Using $J_n$ we can express $g(t_{n})$ as,
	\[
	g(t_{n}) = \int\limits_{0}^{1} g'(m t_{n} + (1-m)\sr(X))dm (t_n-\sr(X)) = J_n z_n.
	\]
	
	Squaring on both sides of \eqref{eq:p1}, and using the fact that projection is non-expansive (see Lemma 10 in \cite{nithia2021}), we obtain
	\begin{align*}
		z_{n}^{2} &\leq [z_{n-1} + a_{n}(g(t_{n-1}) + \varepsilon_{n-1})]^{2}\\
		&\leq [z_{n-1} + a_{n}(J_{n-1}z_{n-1} + \varepsilon_{n-1})]^{2}\\
		&\leq [z_{n-1}(1 + a_{n}J_{n-1}) + a_{n}\varepsilon_{n-1}]^{2}\\
		&\leq z_{n-1}^{2}(1 + a_{n}J_{n-1})^{2} + a_{n}^{2}\varepsilon_{n-1}^{2} + 2z_{n-1}(1 + a_{n}J_{n-1})a_{n}\varepsilon_{n-1}.
	\end{align*}
		Taking conditional expectation given $\mathcal{F}_{n-1}$, where $\mathcal{F}_{n-1}$ is the $\sigma$-field generated by $\{t_k, k < n\}$, and using $\E[\varepsilon_{n-1}|\mathcal{F}_{n-1}] = 0$, we obtain (a.s.)
		\begin{align*}
			\E[z_{n}^{2}|\mathcal{F}_{n-1}] &\leq (1 + a_{n}J_{n-1})^{2} z_{n-1}^{2} + a_{n}^{2}\E[\varepsilon_{n-1}^{2}|\mathcal{F}_{n-1}] +2 z_{n-1}(1 + a_{n}J_{n-1})a_{n}\E[\varepsilon_{n-1}|\mathcal{F}_{n-1}]\\
			&\leq (1 + a_{n}J_{n-1})^{2}z_{n-1}^{2} + a_{n}^{2}\E[\varepsilon_{n-1}^{2}|\mathcal{F}_{n-1}].\numberthis\label{eq:se1}
	\end{align*}
		From \Cref{ass:2}, we know that  $-L_1 \leq J_k \leq -\mu_1$ for all $k\in\{1,\ldots,n-1\}$. Using this fact, we have
		\begin{align}
			(1 + a_{k}J_{k-1})^{2} = 1 + 2a_kJ_{k-1} + a_k^2J_{k-1}^2 \leq 1 - 2a_k\mu_1 + a_k^2L_1^2 \leq \exp\left(-2\mu_1a_k + L_1^2a_k^2\right).\label{eq:jkbound}
		\end{align}
		Using \eqref{eq:jkbound} in \eqref{eq:se1} and taking total expectations, we obtain
		\begin{align}
			\E[z_{n}^{2}] &\leq \exp\left(-2\mu_1a_n + L_1^2a_n^2\right) \E[z_{n-1}^{2}] + a_{n}^{2}\sigma^{2} \nonumber\\
			&\leq z_{0}^{2}\prod\limits_{k=1}^{n}(1 + a_{k}J_{k-1})^{2} + \sigma^{2}\sum\limits_{k=1}^{n}[a_{k}^{2}\prod\limits_{j=k+1}^{n}(1 + a_{j}J_{j-1})^{2}].\label{eq:expec-bd}
	\end{align}
	Hence, we obtain
	\begin{align*}
		\E[z_{n}^{2}] &\leq  z_{0}^{2}\exp\left(-2\mu_1\sum\limits_{k=1}^{n}a_{k}+L_1^2\sum\limits_{k=1}^{n}a_k^2\right) + \\ & \qquad \sigma^{2}\sum\limits_{k=1}^{n}a_{k}^{2}\exp\left(-2\mu_1\sum\limits_{j=k+1}^{n}a_{j} + L_1^2\sum\limits_{j=k+1}^{n}a_j^2\right)\\
		&\leq z_{0}^{2}\exp\left(-2\mu_1 c \log n  + \frac{L_1^2c^2\pi^2}{6}\right) + \\ & \qquad \sigma^{2}\sum\limits_{k=1}^{n}a_{k}^{2}\exp\left(-2\mu_1 c \log\left(\frac{n}{k+1}\right) +\frac{L_1^2c^2\pi^2}{6}\right)\numberthis\label{eq:pisquare6}\\
		&\leq \exp\left(\frac{L_1^2c^2\pi^2}{6}\right)\left[\frac{z_{0}^{2}}{n^{2\mu_1c}} + \sigma^{2}\sum\limits_{k=1}^{n}a_{k}^{2}\left(\frac{n}{k+1}\right)^{-2\mu_1 c}\right]\\
		&\leq \exp\left(\frac{L_1^2c^2\pi^2}{6}\right)\left[\frac{z_{0}^{2}}{n^{2\mu_1 c}} + \sigma^{2}n^{-2\mu_1 c}\sum\limits_{k=1}^{n}\frac{c^{2}}{k^{2}}(k+1)^{2\mu_1 c}\right]\\
		&\leq \exp\left(\frac{L_1^2c^2\pi^2}{6}\right)\left[\frac{z_{0}^{2}}{n^{2\mu_1 c}} + \sigma^{2}\left(\frac{2}{n}\right)^{2\mu_1 c}\sum\limits_{k=1}^{n}c^{2}k^{2\mu_1 c - 2}\right]\stepcounter{equation}\tag{\theequation}\label{eq:p2}\\[1ex]
		&\leq \exp\left(\frac{L_1^2c^2\pi^2}{6}\right)\left[\frac{z_{0}^{2}}{n^{2\mu_1 c}} + \sigma^{2}2^{4\mu_1 c}\frac{c^{2}}{(2\mu_1 c - 1)}\frac{1}{n}\right].
	\end{align*}
	For the inequality in \eqref{eq:pisquare6}, we have used 
	$\sum_{k=1}^{n}\frac{1}{k^2} \leq \sum_{k=1}^{\infty}\frac{1}{k^2} = \frac{\pi^2}{6}.$
	For bounding the sum in \eqref{eq:p2}, we used
	\begin{align}
		\frac{1}{n^{2\mu_1 c}} \sum\limits_{k=1}^{n}k^{2\mu_1 c - 2} &\leq  \frac{1}{n^{2\mu_1 c}}\int\limits_{0}^{n+1}k^{2\mu_1 c - 2}dk
		\leq \frac{(n+1)^{2\mu_1 c - 1}}{n^{2\mu_1 c}(2\mu_1 c - 1)} \leq \frac{2^{\mu_1 c}}{(2\mu_1 c - 1)}\frac{1}{n}.\label{eq:sum-bound}
	\end{align}
	
	Hence proved.
\end{prf1}
\subsection{Proof of Theorem \ref{thm:sr-est-muknown-hpb}}
\label{appendix:proof-sr-est1-hpb}
\begin{prf1}
	We use the technique from \cite{frikha2012concentration}, and tailor the analysis to the SR estimation problem, instead of a general stochastic approximation scheme in \cite{frikha2012concentration}. Moreoever, unlike the bounds in the aforementioned reference, we make all the constants explicit.
	
	The centered form of the iterate $z_n = t_n - \sr(X)$ can be written as a telescoping sum as follows:
	\[
	|z_n| - \E[|z_n|] = \sum\limits_{k=1}^n g_k - g_{k-1} = \sum\limits_{k=1}^n D_k,
	\]
	where $g_k = \E[|z_k| |\mathcal{F}_k], D_k = g_k - g_{k-1}$ and $\mathcal{F}_k = \sigma(t_1,\ldots,t_k)$.\\
	Let $t_j^i(t)$ denote the iterate at time instant  $j$, given that $t_i = t$. Using this notation, we have
	\begin{align*}
		& \E[|t_{j+1}^{i}(t) - t_{j+1}^{i}(t')|^2] \\
		&\leq \E[|t_{j}^{i}(t) - t_{j}^{i}(t') + a_j(\hat{g}(t_{j+1}^{i}(t)) - \hat{g}(t_{j}^{i}(t'))|^2] \\
		&\leq \E[|t_{j}^{i}(t) - t_{j}^{i}(t')|^2] + 2a_j \E[t_{j+1}^{i}(t) - t_{j+1}^{i}(t')]\E[\hat{g}(t_{j+1}^{i}(t)) - \hat{g}(t_{j}^{i}(t'))] + \\ & \qquad a_J^2 \E[|\hat{g}(t_{j+1}^{i}(t)) - \hat{g}(t_{j}^{i}(t'))|^2]\\
		&\leq (1 - 2\mu_1a_j + a_j^2 L_1^2) \E[|t_{j}^{i}(t) - t_{j}^{i}(t')|^2].
	\end{align*}
	Unrolling the recursion above, we obtain 
	\[
	\E[|t_{n}^{i}(t) - t_{n}^{i}(t')|^2] \leq |t-t'|^2 \prod_{j=1}^n (1-2\mu_1a_j + a_j^2L_1^2),
	\]
	leading  to
	\begin{align*}
		\E[|t_n - \sr(X)| |t_i =t]  -  \E[|t_n - \sr(X)| |t_i =t'] &\leq \E[|t_n^i(t) - t_n^i(t')|]\\
		&\leq |t-t'| (\prod_{j=1}^{n-1}(1-2\mu_1a_j + a_j^2L_1^2) )^{1/2}\\
		&\leq a_i |\hat{g} - \hat{g}'|(\prod_{j=1}^{n-1}(1-2\mu_1a_j + a_j^2L_1^2) )^{1/2}\\
		&\leq \Gamma_i |\hat{g} - \hat{g}'|.
	\end{align*}
	where $\Gamma_i = a_i(\prod_{j=i}^{n-1}(1-2\mu_1a_j + a_j^2L_1^2) )^{1/2} $, $t = t_{i-1} + a_i\hat{g},$ and $t' = t_{i-1} + a_i\hat{g}'.$
	Now,
	\begin{align*}
		\mathbb{P}(|z_n| - \E[|z_n|] >\varepsilon)  &= \mathbb{}P(\sum\limits_{k=1}^n D_k > \varepsilon)\\
		&\leq \exp(-\lambda \varepsilon)(\E[\exp(\lambda\sum\limits_{k=1}^n D_k )])\\
		&\leq \exp(-\lambda \varepsilon)\E[\exp(\lambda\sum\limits_{k=1}^{n-1} D_k )]\E[\exp(\lambda D_n )|\mathcal{F}_{n-1}].\stepcounter{equation}\tag{\theequation}\label{eq:n1}
	\end{align*}
	From the proof passage in  \cite[p. 585]{prashanth2020concentration}, it can be seen that an $\Gamma$-Lipschitz function $f$ of a r.v. $Z$ satisfying $|Z|\le \B$ is $\Gamma^2 \B^2$-sub-Gaussian, i.e.,
	\begin{align*}
		\E[\exp(\lambda(f(Z))] &\leq \exp\left(\frac{\lambda^2  \Gamma^2 \B^2}{2}\right).
	\end{align*}
	Using \Cref{ass:Xbdd}, and the fact that $\ell$ is $L_1$ Lipschitz, we have $\hat g$ is $L_1^2 \B^2$-sub-Gaussian. Next, $D_n$ is a $\Gamma_n$-Lipschitz function of $\hat g$, implying $D_n$ is $4\Gamma_n^2 L_1^2 \B^2$-sub-Gaussian.
	Using the bound above in \eqref{eq:n1}, we obtain
	\begin{align*}
		\E[\exp(\lambda D_n )|\mathcal{F}_{n-1}] &\leq \exp\left(2\lambda^2 \Gamma_n^2 L_1^2 \B^2\right).
	\end{align*}
	Plugging this bound into \eqref{eq:n1}, followed by an optimization over  $\lambda$, we obtain
	\begin{align}
		\mathbb{P}(|z_n| - \E[|z_n|] >\varepsilon)  &\leq \exp(-\lambda \varepsilon)\exp(2\lambda^2L_1^2\B^2\sum\limits_{k=1}^{n}\Gamma_k^2)
		\leq \exp\left(-\frac{\varepsilon^2}{16 L_1^2\B^2\sum\limits_{k=1}^{n}\Gamma_k^2}\right). \label{eq:hpb1}
	\end{align}
	We now specialize the bound in \eqref{eq:hpb1} using $a_k = c/k$, with $1/2< \mu_1 c $.
	In particular, we first compute $\sum\limits_{k=1}^{n}\Gamma_k^2$ for this stepsize choice, and subsequently derive the high probability bound. 
	\begin{align*}
		\sum\limits_{k=1}^{n}\Gamma_k^2 &= \sum\limits_{k=1}^{n} a_k^2\left(\prod_{j=k}^{n-1}(1-2\mu_1a_j + a_j^2L_1^2)\right) \leq \sum\limits_{k=1}^{n} a_k^2  \exp\left(-2\mu_1\sum_{j=k}^{n}a_j\right)\exp\left(L_1^2\sum_{j=k}^{n}a_j^2\right) \\
		&\leq \exp\left(\frac{L_1^2c^2\pi^2}{6}\right)\sum\limits_{k=1}^{n} a_k^2\exp\left(-2\mu_1\sum_{j=k}^{n}a_j\right) \\ &  \leq \exp\left(\frac{L_1^2c^2\pi^2}{6}\right)\sum\limits_{k=1}^{n} a_k^2 \exp\left(-2\mu_1c\log\left(\frac{n}{k+1}\right)\right) \\
		&\leq  \exp\left(\frac{L_1^2c^2\pi^2}{6}\right)\sum\limits_{k=1}^{n} \frac{c^2}{k^2}\left(\frac{k+1}{n}\right)^{2\mu_1c} \leq \exp\left(\frac{L_1^2c^2\pi^2}{6}\right)\frac{2^{4\mu_1 c}c^{2}}{(2\mu_1 c - 1)}\frac{1}{n}.
	\end{align*}
	Using the bound on $\sum\limits_{k=1}^{n}\Gamma_k^2$ in \eqref{eq:hpb1}, we obtain
	\begin{align}
		\P(|z_n| - \E[|z_n|] >\varepsilon)  &\leq \exp\left(- C_1 n\varepsilon^2\right), \label{eq:hpb21}
	\end{align}
	where $C_1 = \dfrac{(2\mu_1 c - 1)}{2^{4\mu_1 c + 4}c^{2} L_1^2\B^2}\exp\left(-\frac{L_1^2c^2\pi^2}{6}\right)$.
	Using the bound on $\E[|z_n|]$ from Theorem \ref{thm:sr-est-muknown} in \eqref{eq:hpb21}, we have
	\begin{align*}
		&\P\left(|z_n| - E|z_n| \le \sqrt{\frac{\log\left(1/\delta\right)}{\tilde c n}} + \exp\left(\frac{L_1^2c^2\pi^2}{12}\right)\left[
		\frac{\E[|t_1-\sr(X)|]}{n^{\mu_1 c}} + \frac{c \sigma 2^{2\mu_1 c}}{\sqrt{(2\mu_1 c - 1)}\sqrt{n}}\right]
		\right)& \\& \ge 1-\delta.
	\end{align*}
	Hence proved.
\end{prf1}

\subsection{Proof of Theorem \ref{thm:sr-est-muuniv}}
\label{appendix:proof-sr-est2}
\begin{prf1}
	The passage leading up to \eqref{eq:expec-bd} holds for any choice of stepsize, and does not require the knowledge of $\mu_1$ for setting the stepsize constant $c$. Using \eqref{eq:expec-bd} as the starting point, we have
	\begin{align}
		\E[z_{n}^{2}] &\leq z_{0}^{2}\prod\limits_{k=1}^{n}(1 + a_{k}J_{k-1})^{2} + \sigma^{2}\sum\limits_{k=1}^{n}[a_{k}^{2}\prod\limits_{j=k+1}^{n}(1 + a_{j}J_{j-1})^{2}].\label{eq:p3}
	\end{align}
	We split the terms on the RHS above into two regimes: $k<n_0$ and $k\geq n_0$. From \Cref{ass:2}, we have $|J_k|<L_1$. We shall now simplify \eqref{eq:p3} under two different stepsize choices.\\[1ex]
	\textbf{Case I:} $a_k = \frac{c}{k}$\\[0.5ex]
	Notice that
	\begin{align*}
		\prod\limits_{k=1}^{n}(1 + a_{k}J_{k-1})^{2} &= \prod\limits_{k=1}^{n_0}(1 + a_{k}^2J_{k-1}^2 + 2a_kJ_{k-1})\prod\limits_{k=n_0+1}^{n}(1 + a_{k}J_{k-1})^{2}\\ 
		&\leq \prod\limits_{k=1}^{n_0}(1 + a_{k}^2L_1^2 + 2a_kL_1)\prod\limits_{k=n_0+1}^{n}(1 + a_{k}J_{k-1})^{2}\\
		&\leq (1 + cL_1)^{2n_0} e^{-\mu_1 \sum\limits_{n_0+1}^{n} a_k}\\
		&\leq (1 + cL_1)^{2n_0} e^{-\mu_1 c \log\left(\frac{n}{n_0+1}\right)}\leq (1 + cL_1)^{2n_0}  \left(\frac{n_0+1}{n}\right)^{\mu_1 c}\\
		&\leq C(n_0) \frac{1}{n^{\mu_1 c}},
	\end{align*}
	where $C(n_0) = (1 + cL_1)^{2n_0} (n_0+1)^{\mu_1 c}$.
	
	We now handle the second term in \eqref{eq:p3} as follows:
	\begin{align}
		\sum\limits_{k=1}^{n}[a_{k}^{2}\prod\limits_{j=k+1}^{n}(1 + a_{j}J_{j-1})^{2}] &= \sum\limits_{k=1}^{n_0-1}[a_{k}^{2}\prod\limits_{j=k+1}^{n}(1 + a_{j}J_{j-1})^{2}]+\sum\limits_{k=n_0}^{n}[a_{k}^{2}\prod\limits_{j=k+1}^{n}(1 + a_{j}J_{j-1})^{2}]\nonumber\\
		&\leq (1 + cL_1)^{2n_0} \left(\frac{n_0+1}{n}\right)^{\mu_1 c}\,\,\sum\limits_{k=1}^{n_0-1}a_{k}^{2}+\sum\limits_{k=n_0}^{n}a_{k}^{2}\left(\frac{k+1}{n}\right)^{\mu_1 c}\nonumber\\
		&\leq (1 + cL_1)^{2n_0}  (n_0+1)^{\mu_1 c}\frac{\pi^2}{6} \frac{1}{n^{\mu_1 c}}+\frac{c^2}{n^{\mu_1 c}}\sum\limits_{k=n_0}^{n}\frac{c^2}{k^2}(k+1)^{\mu_1 c}. \label{eq:mu-gen}
	\end{align}
	In the above, we used $\sum\limits_{k=1}^{n}a_k^{2} = \sum\limits_{k=1}^{n}\frac{c^2}{k^2} < c^2 \frac{\pi^2}{6}$ to arrive at the inequality in \eqref{eq:mu-gen}.
	
	We now simplify \eqref{eq:mu-gen} based on the value of $\mu_1c$ in the following three cases:\\[1ex]
	\textbf{Case a:} $\mu_1c>1$\\
	Using the bound in \eqref{eq:sum-bound}, we have $ \sum\limits_{k=n_0}^{n}\frac{c^2}{k^2}(\frac{k+1}{n})^{\mu_1 c} \leq \frac{2^{\mu_1 c}c^{2}}{(\mu_1 c - 1)}\frac{1}{n}$. Substituting this bound in \eqref{eq:mu-gen}, we obtain
	\begin{equation}
		\E[z_{n}^{2}] \leq C(n_0)\left(z_{0}^{2} + \sigma^{2}\frac{\pi^2}{6}\right)\frac{1}{{n}^{\mu_1 c}} +\frac{\sigma^{2}c^2 2^{\mu_1 c}}{(\mu_1 c -1)}\frac{1}{n}.
	\end{equation}
	\textbf{Case b:} $\mu_1c=1$\\
	In this case, we have 
	\[ \sum\limits_{k=n_0}^{n}\frac{c^2}{k^2}\left(\frac{k+1}{n}\right)^{\mu_1 c} \leq \frac{2}{n}\sum\limits_{k=n_0}^{n}\frac{c^2}{k}\leq 2c^{2}\frac{\log(n+1)}{n}.\] 
	Substituting the bound derived above in \eqref{eq:mu-gen}, we obtain
	\begin{equation}
		\E[z_{n}^{2}] \leq C(n_0)\left(z_{0}^{2} + \sigma^{2}\frac{\pi^2}{6}\right)\frac{1}{{n}} +2\sigma^{2}c^2 \frac{\log(n+1)}{n}.
	\end{equation}
	\textbf{Case c:} $\mu_1c<1$\\
	In this case, we can infer that
	\[ \frac{1}{n^{\mu_1c}}\sum\limits_{k=n_0}^{n}\frac{c^2}{k^2}(k+1)^{\mu_1 c} \leq \frac{2^{\mu_1c}}{n^{\mu_1c}}\sum\limits_{k=n_0}^{n}\frac{c^2}{k^{(1+(1-\mu_1c))}}\leq \frac{2^{\mu_1c}c^2}{(1-\mu_1c)n},\] 
	leading to the following overall bound:
	\begin{equation}
		\E[z_{n}^{2}] \leq C(n_0)\left(z_{0}^{2} + \sigma^{2}\frac{\pi^2}{6}\right)\frac{1}{{n}^{\mu_1 c}} +\sigma^{2}\frac{2^{\mu_1c}c^2}{(1-\mu_1c)}\frac{1}{n}.
	\end{equation}
	\ \\
	We now turn to analyzing the case when the stepsize $a_k$ is larger than $c/k$.\\[1ex]
	\textbf{Case II:} $a_k = \frac{c}{k^{\alpha}}$ for $\alpha \in (0,1)$.\\[1ex]
	First, we bound a factor in the first term of \eqref{eq:p3} as follows:
	\begin{align*}
		\prod\limits_{k=1}^{n}(1 + a_{k}J_{k-1})^{2} &= \prod\limits_{k=1}^{n_0}(1 + a_{k}^2J_{k-1}^2 + 2a_kJ_k)\prod\limits_{k=n_0+1}^{n}(1 + a_{k}J_{k-1})^{2}\\
		&\leq (1 + cL_1)^{2n_0} \exp\left(-\mu_1 \sum\limits_{n_0+1}^{n} a_k\right)\\
		&\leq (1 + cL_1)^{2n_0}  \exp\left(-\frac{\mu_1 c(n^{1-\alpha}-n_0^{1-\alpha})}{1-\alpha}\right)\\
		&\leq (1 + cL_1)^{2n_0}   \exp\left(\frac{\mu_1 cn_0^{1-\alpha}}{1-\alpha}\right) \exp\left(-\frac{\mu_1 cn^{1-\alpha}}{1-\alpha}\right)\\
		&\leq \widetilde C(n_0) \exp\left(-\frac{\mu_1 cn^{1-\alpha}}{1-\alpha}\right),\stepcounter{equation}\tag{\theequation}\label{eq:p5}
	\end{align*}
	where $\widetilde C(n_0) = (1 + cL_1)^{2n_0}  \exp\left(\frac{\mu_1 cn_0^{1-\alpha}}{1-\alpha} \right)$.
	
	We now bound the second term in \eqref{eq:p3} by splitting the term around $n_0$ as follows:
	\begin{align*}
		&\sum\limits_{k=1}^{n}[a_{k}^{2}\prod\limits_{j=k+1}^{n}(1 + a_{j}J_{j-1})^{2}] \\
		&= \sum\limits_{k=1}^{n_0-1}[a_{k}^{2}\prod\limits_{j=k+1}^{n}(1 + a_{j}J_{j-1})^{2}]+\sum\limits_{k=n_0}^{n}[a_{k}^{2}\prod\limits_{j=k+1}^{n}(1 + a_{j}J_{j-1})^{2}]\\
		&\leq \widetilde C(n_0) \exp\left(-\frac{\mu_1 cn^{1-\alpha}}{1-\alpha}\right)\sum\limits_{k=1}^{n_0-1}a_{k}^{2} +\sum\limits_{k=n_0}^{n}a_{k}^{2}\exp\left(-\frac{\mu_1 c(n^{1-\alpha}-k^{1-\alpha})}{1-\alpha}\right)\\
		&\leq \widetilde C(n_0)c^2 n_0 \exp\left(-\frac{\mu_1 cn^{1-\alpha}}{1-\alpha}\right)+c^2\exp\left(-\frac{\mu_1 cn^{1-\alpha}}{1-\alpha}\right) \sum\limits_{k=n_0}^{n}k^{-2\alpha}\exp\left(\frac{\mu_1 ck^{1-\alpha}}{1-\alpha} \right)\stepcounter{equation}\tag{\theequation}\label{eq:p65}\\
		&\leq \widetilde C(n_0)c^2 n_0 \exp\left(-\frac{\mu_1 cn^{1-\alpha}}{1-\alpha}\right)+\frac{2(\mu_1 c)^{\frac{\alpha}{1-\alpha}}c^2}{1-\alpha} \frac{1}{n^{\alpha}}\stepcounter{equation}\tag{\theequation}\label{eq:p6}.
	\end{align*}
	In arriving at \eqref{eq:p6}, we have bounded the sum $c^2\exp(-\frac{\mu_1 cn^{1-\alpha}}{1-\alpha}) \sum\limits_{k=n_0}^{n}k^{-2\alpha}\exp(\frac{\mu_1 ck^{1-\alpha}}{1-\alpha} )$ in \eqref{eq:p65} by using arguments similar to those used in arriving at  \cite[Eq. (79)]{prashanth2020concentration}. In particular, the latter bound uses Jensen's inequality and the convexity of $f(x) = x^{-2\alpha}\exp(x^{1-\alpha})$. 
	
	Substituting the bounds in \eqref{eq:p5} and \eqref{eq:p6} in \eqref{eq:p3}, we obtain
	\begin{equation}
		\E[z_{n}^{2}] \leq \widetilde C(n_0)\left(z_{0}^{2} + \sigma^{2}c^2 n_0\right)\exp\left(-\frac{\mu_1 cn^{1-\alpha}}{1-\alpha}
		\right) +\frac{\sigma^{2}2(\mu_1 c)^{\frac{\alpha}{1-\alpha}}c^2}{(1-\alpha)n^\alpha}.
	\end{equation}
	Hence proved.
\end{prf1}

\subsection{Proof of Theorem \ref{thm:sr-est-muuniv-hpb}}
\label{appendix:proof-sr-est2-hpb}
\begin{prf1}
	Recall that $n_0$ is chosen such that for all $n\ge n_0$, we have $\frac{c}{n^\alpha} L_1^2 < \mu_1$. 
	Notice that 
	\begin{align}
		\sum\limits_{k=1}^{n}\Gamma_k^2 = \sum\limits_{k=1}^{n_0-1}\Gamma_k^2 + \sum\limits_{k=n+0}^{n}\Gamma_k^2. \label{eq:lsplit}
	\end{align}
	We simplify the first term on the RHS as follows:
	\begin{align*}
		\sum\limits_{k=1}^{n_0-1}\Gamma_k^2 &= \sum\limits_{k=1}^{n_0-1} a_k^2(\prod_{j=k}^{n-1}(1-2\mu_1a_j + a_j^2L_1^2) ) \\
		&= \sum\limits_{k=1}^{n_0-1} a_k^2(\prod_{j=k}^{n_0-1}(1-2\mu_1a_j + a_j^2L_1^2) ) (\prod_{j=n_0}^{n}(1-2\mu_1a_j + a_j^2L_1^2) )\\
		&\leq (1+c^2L_1^2)^{n_0} \sum\limits_{k=1}^{n_0-1} a_k^2 (1+c^2L_1^2)^{-k}\prod_{j=n_0}^{n}(1-a_j(2\mu_1 - a_jL_1^2) )\\
		&\leq (1+c^2L_1^2)^{n_0} \sum\limits_{k=1}^{n_0-1} a_k^2 \exp(-\mu_1\sum\limits_{j=n_0}^{n}a_j) \\
		&\leq (1+c^2L_1^2)^{n_0}\exp\left(-\frac{\mu_1 c(n^{1-\alpha}-n_0^{1-\alpha})}{1-\alpha}\right) \sum\limits_{k=1}^{n_0-1} a_k^2(1+c^2L_1^2)^{-k} \\
		&\leq \frac{(1+c^2L_1^2)^{n_0+1}c^2}{c^2L_1^2}\exp\left(-\frac{\mu_1 c(n^{1-\alpha}-n_0^{1-\alpha})}{1-\alpha}\right).\stepcounter{equation}\tag{\theequation}\label{eq:p63}
	\end{align*}
	We now simplify the second term on the RHS of \eqref{eq:lsplit} as follows:
	\begin{align*}
		\sum\limits_{k=n_0}^{n}\Gamma_k^2 &= \sum\limits_{k=n_0}^{n} a_k^2(\prod_{j=k}^{n-1}(1-2\mu_1a_j + a_j^2L_1^2) ) \leq \sum\limits_{k=n_0}^{n} a_k^2 \prod_{j=k}^{n}(1-a_j(2\mu_1 - a_jL_1^2) )\\
		&\leq  \sum\limits_{k=n_0}^{n} a_k^2 \exp(-\mu_1\sum\limits_{j=k}^{n}a_j) \leq  \sum\limits_{k=n_0}^{n} a_k^2 \exp\left(-\frac{\mu_1 c(n^{1-\alpha}-k^{1-\alpha})}{1-\alpha}\right) \\
		&\leq  \exp\left(-\frac{\mu_1 cn^{1-\alpha}}{1-\alpha}\right)\sum\limits_{k=n_0}^{n} \frac{c^2}{k^{2\alpha}}\exp\left(\frac{\mu_1 ck^{1-\alpha}}{1-\alpha}\right) \\
		&\leq \frac{2(\mu_1 c)^{\frac{\alpha}{1-\alpha}}c^2}{1-\alpha} \frac{1}{n^{\alpha}}.\stepcounter{equation}\tag{\theequation}\label{eq:p64}
	\end{align*}
	Using \eqref{eq:p63} and \eqref{eq:p64} in \eqref{eq:lsplit}, we obtain
	\begin{align*}
		\sum\limits_{k=1}^{n}\Gamma_k^2 = \sum\limits_{k=1}^{n_0-1}\Gamma_k^2+\sum\limits_{k=n_0}^{n}\Gamma_k^2
		&\leq \frac{(1+c^2L_1^2)^{n_0+1}c^2}{c^2L_1^2}\exp\left(-\frac{\mu_1 c(n^{1-\alpha}-n_0^{1-\alpha})}{1-\alpha}\right)  +  \\ & \qquad \frac{2(\mu_1 c)^{\frac{\alpha}{1-\alpha}}c^2}{1-\alpha} \frac{1}{n^{\alpha}}.
	\end{align*}
	Using the above bound in \eqref{eq:hpb1}, we obtain
	\begin{align}
		\P(|z_n| - \E[|z_n|] >\varepsilon)  &\leq \exp\left(- \tilde c n\varepsilon^2\right), \label{eq:hpb2}
	\end{align}
	where $\tilde c = \frac{(1-\alpha)}{2(\mu_1 c)^{\frac{\alpha}{1-\alpha}}c^2}$.
	Finally, using the bound on $\E[|z_n|]$ from Theorem \ref{thm:sr-est-muknown} in \eqref{eq:hpb2}, we obtain
	\begin{align*}
		\P\left(|t_n-\sr(X)| \le C_2\exp\left(-\frac{\mu_1 cn^{1-\alpha}}{2(1-\alpha)}\right) +\frac{C_3}{n^{\alpha/2}}
		\right)\ge 1-\delta, 
	\end{align*}
	where $C_2$ and $C_3$ are as defined in the theorem statement.
	Hence proved.
\end{prf1}

\section{Proofs for SR optimization}
\label{sec:proofs-opt}
\subsection{Proof of Lemma \ref{lem:srprime-consistent}}
\label{appendix:proof-lemma-opt2}
\begin{prf1}
We first bound $\E|B_m(\theta) - B(\theta)|$ as follows:
		\begin{align*}
			&\E[|B_m(\theta) - B(\theta)|] \\
			&= \E\left| \frac{1}{m}\sum_{i=1}^{m}  \ell'\left(\xi_i(\theta) - t_m\right) - \E[\ell'(\xi(\theta) - \sr(\theta))]\right|\\
			&= \E\left| \frac{1}{m}\sum_{i=1}^{m}  \left[\ell'\left(\xi_i(\theta) - t_m\right) - \ell'\left(\xi_i(\theta) - \sr(\theta)\right)\right] + \left[\ell'\left(\xi_i(\theta) - \sr(\theta)\right) - \E[\ell'(\xi(\theta) - \sr(\theta))]\right]\right|\\
			&\le \underbrace{\frac{1}{m}\sum_{i=1}^{m} \E\left| \ell'\left(\xi_i(\theta) - t_m\right) - \ell'\left(\xi_i(\theta) - \sr(\theta)\right)\right|}_{(I)}+ \underbrace{\E\left|\frac{1}{m}\sum_{i=1}^{m}  \ell'\left(\xi_i(\theta) - \sr(\theta)\right) - \E[\ell'(\xi(\theta) - \sr(\theta))]\right|}_{(II)}.\numberthis\label{eq:a11}
		\end{align*}
		Using the fact that $\ell'$ is $L_2$-Lipschitz, we bound the term (I) on the RHS above as follows:
		\begin{align*}
			(I)\le  L_2\E\left| t_m - \sr(\theta)\right| \le \frac{L_2 K_1(m)}{\sqrt{m}},
		\end{align*}
		where the final inequality follows by an application of the bound in Theorem \ref{thm:sr-est-muknown}.
		\begin{align}
			\E[|t_{m}-\sr(X)|] \le \sqrt{\E[(t_{m}-\sr(X))^{2}]} \leq  \frac{K_1(m)}{\sqrt{m}}, \textrm{ where }
		\end{align} 
		$K_1(m)=   \exp\left(\frac{L_1^2c^2\pi^2}{12}\right) \left[ \frac{|t_{0}-\sr(X)|}{m^{\mu_1 c-\frac1{2}}} + \frac{2^{\mu_1 c}c\sigma}{(\mu_1 c - \frac1{2})}\right].$
		
		Next, we bound the term (II)  on the RHS of \eqref{eq:a11} as follows: Letting $\Lambda_i=\ell'\left(\xi_i(\theta) - \sr(\theta)\right)$ and $\Lambda=\ell'\left(\xi(\theta) - \sr(\theta)\right)$,
		\begin{align*}
			(II)= \E\left|\frac{1}{m}\sum_{i=1}^m  \Lambda_i - \E\Lambda\right| &\le \frac{1}{m}\sqrt{\sum_{i=1}^m \E\left(\Lambda_i - \E \Lambda\right)^2} \le \frac{\varl}{\sqrt{m}},\numberthis\label{eq:bmbound}
		\end{align*}
		where the final inequality used the variance bound in \Cref{ass:lossVariance}.
		
		Thus,
		\begin{align*}
			\E[|B_m(\theta) - B(\theta)|] \le \frac{L_2 K_1(m)}{\sqrt{m}} + \frac{\varl}{\sqrt{m}}.
		\end{align*}
		Next, we turn to bounding $\E|A_m(\theta) - A(\theta)|$. 
		Letting $\tilde\Lambda_i=\ell'\left(\xi_i(\theta) - \sr(\theta)\right)\xi'_i(\theta)$ and $\tilde\Lambda=\ell'\left(\xi(\theta) - \sr(\theta)\right)\xi'(\theta)$, we have
		\begin{align*}
			\E[|A_m(\theta) - A(\theta)|] &\le \frac{1}{m}\sum_{i=1}^{m} \E\left| \ell'\left(\xi_i(\theta) - t_m\right)\xi_i'(\theta) - \ell'\left(\xi_i(\theta) - \sr(\theta)\right)\xi_i'(\theta)\right|
			+ \E\left|\frac{1}{m}\sum_{i=1}^m  \tilde\Lambda_i - \E\tilde\Lambda\right|\\
			& \le M_2L_2\E\left| t_m - \sr(\theta)\right| + \frac{1}{m}\sqrt{\sum_{i=1}^m \E\left(\tilde\Lambda_i - \E \tilde\Lambda\right)^2} \\[0.5ex]
			& \le \frac{K_1(m) M_2 L_2}{\sqrt{m}} + \frac{\tilde\varl}{\sqrt{m}}.\numberthis\label{eq:ambound}
		\end{align*}
		Using \eqref{eq:bmbound} and \eqref{eq:ambound}, we arrive at a bound on the UBSR derivative estimation error $\left|h_m'(\theta) - \frac{d\sr(\theta)}{d\theta}\right|$ as follows: 
		\begin{align*}
			\E\left|h_m'(\theta) - \frac{d\sr(\theta)}{d\theta}\right| 
			&= \E\left|\frac{A_m(\theta)}{B_m(\theta)} - \frac{A(\theta)}{B(\theta)}\right|\\[0.75ex]
			&\leq \frac{|B(\theta)|\E[|A_m(\theta) -A(\theta)|] + |A(\theta)| \E[|B_m(\theta) - B(\theta)|]}{\eta^2} \\
			&\leq \frac{\sqrt{\beta_1}(K_1(m)M_2L_2 + \tilde\varl)  + \sqrt{\beta_1} M_2(K_1(m)L_2 + \varl)}{\eta^2 \sqrt{m}},
		\end{align*}
		where the final inequality used \eqref{eq:beta1}, Assumptions \ref{ass:Xbdd},  \ref{ass:lossLipschitz} and \ref{ass:xiprimeBound}. This proves the first claim.
		
		\noindent Before we prove the second claim, we first bound 
		$\E\left[\left(B_m(\theta) - B(\theta)\right)^2\right]$ as follows:
		\begin{align*}
			&\E[\left(B_m(\theta) - B(\theta)\right)^2] \\
			&= \E\left( \frac{1}{m}\sum_{i=1}^{m}  \left[\ell'\left(\xi_i(\theta) - t_m\right) - \ell'\left(\xi_i(\theta) - \sr(\theta)\right)\right] + \left[\ell'\left(\xi_i(\theta) - \sr(\theta)\right) - \E[\ell'(\xi(\theta) - \sr(\theta))]\right]\right)^2\\
			&\le \underbrace{2\E\left(\frac{1}{m}\sum_{i=1}^{m}  \ell'\left(\xi_i(\theta) - t_m\right) - \ell'\left(\xi_i(\theta) - \sr(\theta)\right)\right)^2}_{(III)}\\
			&\quad+ 
			\underbrace{2\E\left(\frac{1}{m}\sum_{i=1}^{m}  \ell'\left(\xi_i(\theta) - \sr(\theta)\right) - \E[\ell'(\xi(\theta) - \sr(\theta))]\right)^2}_{(IV)}.\numberthis\label{eq:a22}
		\end{align*}
		Using the fact that $\ell'$ is $L_2$-Lipschitz, we bound the term (I) on the RHS above as follows:
		\begin{align*}
			(III)\le 2 L_2^2\E\left( t_m - \sr(\theta)\right)^2 \le \frac{2L_2^2 K_1(m)^2}{m},
		\end{align*}
		where the final inequality follows by an application of the bound in Theorem \ref{thm:sr-est-muknown}.
		
		Next, we bound the term (IV)  on the RHS of \eqref{eq:a22} as 
		\begin{align*}
			(IV)= 2\E\left(\frac{1}{m}\sum_{i=1}^m  \Lambda_i - \E\Lambda\right)^2  \le \frac{\varl^2}{m},
		\end{align*}
		where we used the variance bound in \Cref{ass:lossVariance}.
		Thus,
		\begin{align*}
			&\E[\left(B_m(\theta) - B(\theta)\right)^2] \le \frac{2L_2^2 K_1(m)^2}{m} + \frac{2\varl^2}{m}.\numberthis\label{eq:bmsquarebound}
		\end{align*}
		Along similar lines,
		\begin{align*}
			&\E[\left(A_m(\theta) - A(\theta)\right)^2] \le \frac{2L_2^2 M_2^2 K_1(m)^2}{m} + \frac{2\tilde\varl^2}{m}.\numberthis\label{eq:amsquarebound}
		\end{align*}
		\noindent For the second claim in the statement of the lemma, i.e., $
		\E\left(h_m'(\theta) - \frac{d\sr(\theta)}{d\theta}\right)^2 \leq \frac{C_5}{m}$, we have
		\begin{align*}
			\E\left[\left|h_m'(\theta) - \frac{d\sr(\theta)}{d\theta}\right|^2\right] &= \E\left[\left|\frac{A_m(\theta)}{B_m(\theta)}- \frac{A(\theta)}{B(\theta)}\right|^2\right]\\
			&= \E\left[\left|\frac{B(\theta)A_m(\theta) - A(\theta)B(\theta) + A(\theta)B(\theta)- A(\theta)B_m(\theta) }{B_m(\theta)B}\right|^2\right]\\
			&= \E\left[\left|\frac{B(\theta)(A_m(\theta) - A(\theta)) - A(\theta)(B_m(\theta)- B(\theta)) }{B_m(\theta)B}\right|^2\right]\\
			&\leq \frac{2B^2(\theta)\E[\left(A_m(\theta) - A(\theta)\right)^2] +2 A^2(\theta)\E[|B_m(\theta)- B(\theta)|^2] }{\eta^4}\\
			&\leq \frac{2\beta_1(2 L_2^2 K_1(m)^2 + 2\varl^2) + 2\beta_1M_2^2(2 L_2^2 M_2^2 K_1(m)^2 + 2\tilde\varl^2)}{\eta^4m}\\
			&= \frac{C_5}{m},
		\end{align*}
		where the final inequality used \eqref{eq:beta1}, \eqref{eq:bmsquarebound}, \eqref{eq:amsquarebound} and  Assumption \ref{ass:xiprimeBound}.
		Hence proved.
\end{prf1}

\subsection{Proof of Theorem \ref{thm:sr-opt-muknown}}
\label{appendix:proof-sr-opt1}

In order to derive a non asymptotic bound for the last iterate in Theorem \ref{thm:sr-opt-muknown}, we need an upper bound on the second derivative of $\sr(\theta)$. This bound is provided in the following lemma.
	\begin{lemma}
		\label{lem:srdoubleprime}
		Suppose \Crefrange{ass:1}{ass:2} hold for every $\theta \in [\theta_l,\theta_u]$
		and \Crefrange{ass:lprimelowerbound}{ass:xiprimeBound} hold. Then, we have
		\begin{align}
			|h''(\theta)| \leq L_4,
		\end{align}
		where $L_4=\frac{2L_1L_2M_2^2 + 2L_1L_2B_1M_2 + L_1^2L_3}{\eta^2}$, with $L_1,L_2,L_3,M_2$ and $\eta$ as specified in Assumptions \ref{ass:lprimelowerbound}--\ref{ass:xiprimeBound} and $B_1=\frac{L_1M_2}{\eta}$.
	\end{lemma}
	\begin{prf}
		We first provide upper bound for $|h'(\theta)|$. 
		\begin{align}
			|h'(\theta)| &= \frac{|\mathbb{E}[(\ell'(\xi(\theta)-h(\theta))\xi'(\theta)]|}{|\mathbb{E}[(\ell'(\xi(\theta)-h(\theta))]|} 
			\leq \frac{|\mathbb{E}[(\ell'(\xi(\theta)-h(\theta))\xi'(\theta)]|}{\eta} \nonumber\\
			& \leq \frac{|\mathbb{E}[(\ell'(\xi(\theta)-h(\theta))|\xi'(\theta)|]|}{\eta} 
			\leq \frac{L_1M_2}{\eta} = B_1. \label{eq:gradientbound}
		\end{align}
		The expression for $h''(\theta)$ is obtained by differentiating $h'(\theta)$. 
		\begin{align}
			h''(\theta) = \frac{d^2\sr(\theta)}{d\theta^2}
			= \frac{\frac{dA}{d\theta}B - \frac{dB}{d\theta}A}{B^2} ,
		\end{align}
		where $A = \E[\ell'(\xi(\theta)-\sr(\theta))\xi'(\theta)]$, $B = \E[\ell'(\xi(\theta)-\sr(\theta))]$, $\frac{dA}{d\theta} = \E[\ell''(\xi(\theta)-\sr(\theta))(\xi'(\theta)-\frac{d\sr(\theta)}{d\theta})\xi'(\theta) + \ell'(\xi(\theta)-\sr(\theta))\xi''(\theta)]$ and $\frac{dB}{d\theta} = \E[\ell''(\xi(\theta)-\sr(\theta))(\xi'(\theta)-\frac{d\sr(\theta)}{d\theta})]$.
		\begin{align*}
			\frac{dA}{d\theta} &\leq \E\left[|\ell''(\xi(\theta)-\sr(\theta))|\left(|\xi'(\theta)|+\left|\frac{d\sr(\theta)}{d\theta}\right|\right)|\xi'(\theta)| + |\ell'(\xi(\theta)-\sr(\theta))||\xi''(\theta)|\right] \\
			&\leq L_2M_2(M_2+B_1) + L_1L_3
		\end{align*}
		\begin{align*}
			\frac{dB}{d\theta} &\leq \E\left[|\ell''(\xi(\theta)-\sr(\theta))|\left(|\xi'(\theta)|+\left|\frac{d\sr(\theta)}{d\theta}\right|\right) \right] \\
			&\leq L_2(M_2+B_1)
		\end{align*}
		Using assumptions \ref{ass:lossLipschitz} to \ref{ass:xiprimeBound}, we bound the absolute value of $h''(\theta)$ as follows:
		\begin{align*}
			|h''(\theta)| &\leq \frac{\left|\frac{dA}{d\theta}B\right| + \left|\frac{dB}{d\theta}A\right|}{B^2} \\
			&\leq \frac{(L_2M_2^2 +L_2B_1M_2 + L_1L_3)L_1 + (L_2M_2+L_2B_1)L_1M_2}{\eta^2} \\
			&= \frac{2L_1L_2M_2^2 + 2L_1L_2B_1M_2 + L_1^2L_3}{\eta^2}.
		\end{align*}
	\end{prf}

\subsection*{Proof of Theorem \ref{thm:sr-opt-muknown}}
\begin{prf1}
	We first rewrite the update rule \eqref{eq:sr-gd-update} as follows:
	\begin{align*}
		\theta_{n} & = \theta_{n-1} - b_n h'_m(\theta_{n-1})
		= \theta_{n-1} -  b_n\left ( h'(\theta_{n-1}) +  \varepsilon_{n-1}\right),
	\end{align*}
	where $\varepsilon_{n-1} = h'_m(\theta_{n-1})  - h'(\theta_{n-1})$. 
	
	Letting $z_n = \theta_{n} - \theta^*$, we have
	\begin{align*}
		z_{n} & = z_{n-1} -  b_n\left ( h'(\theta_{n-1}) +  \varepsilon_{n-1}\right).
	\end{align*}
	Let $M_{k}  = \int\limits_{0}^{1} [h''(m \theta_{k} + (1-m)\theta^{*})]dm$. Then,
	\begin{align*}
		h'(\theta_n)  &= \int\limits_{0}^{1} [h''(m \theta_{k} + (1-m)\theta^{*})]dm (\theta_n-\theta^*)=M_n z_n, \textrm{ and}\\
		z_{n} & = z_{n-1}(1 -  b_nM_{n-1}) - a_n\varepsilon_{n-1}.
	\end{align*}
	Unrolling the equation above, we obtain
	\begin{align*}
		z_{n} &= z_{0}\prod\limits_{k=1}^{n}(1 - b_kM_{k-1}) - \sum\limits_{k=1}^{n}[b_k \varepsilon_{k-1} \prod\limits_{j=k+1}^{n}(1 - b_jM_{j-1})].
	\end{align*}
	Taking expectations, using Jensen's inequality together with the fact $\left\| a - b\right\|^2 \le 3 \left\| a \right\|^2 + 3\left\| b\right\|^2$, we obtain
	\begin{align*}
		\E&[(z_{n})^2] \\
		&\leq 3 \E[z_0^2]\prod\limits_{k=1}^{n}(1 - b_kM_{k-1})^2 + 3 \E[\sum\limits_{k=1}^{n}[b_k \varepsilon_{k-1} \prod\limits_{j=k+1}^{n}(1 - b_jM_{j-1})]^2\\
		&\leq 3 \E[z_0^2]\prod\limits_{k=1}^{n}(1 - b_kM_{k-1})^2 + 3 \E[(\sum\limits_{k=1}^{n}[b_k \varepsilon_{k-1} \prod\limits_{j=k+1}^{n}(1 - b_jM_{j-1}))^2]\\
		&\leq 3\E[z_0^2](\mathcal{P}_{1:n})^2 + 3 \E[(\sum\limits_{k=1}^{n}b_k \varepsilon_{k-1} \mathcal{P}_{k+1:n})^2]\tag{where $\mathcal{P}_{i:j} =\prod\limits_{k=i}^{j}(1 - b_kM_{k-1})$}\\
		&\leq 3\E[z_0^2] \exp\left(\frac{b^2L_4^2\pi^2}{6}\right)n^{-2\mu_2 b} + 3 \E[(\sum\limits_{l=1}^{n}\sum\limits_{k=1}^{n}[b_kb_l\varepsilon_{l-1} \varepsilon_{k-1} \mathcal{P}_{k+1:n}\mathcal{P}_{l+1:n})]\\
		&\leq 3\E[z_0^2]\exp\left(\frac{b^2L_4^2\pi^2}{6}\right)n^{-2\mu_2 b} + 3 \E[\sum\limits_{k=1}^{n}b_k^2 \varepsilon_{k-1}^2 (\mathcal{P}_{k+1:n})^2 + \sum\limits_{k\neq l}^{n}b_kb_l\varepsilon_{l-1} \varepsilon_{k-1} \mathcal{P}_{k+1:n}\mathcal{P}_{l+1:n}]\\
		&\leq 3\E[z_0^2]\exp\left(\frac{b^2L_4^2\pi^2}{6}\right)n^{-2\mu_2 b} + 3 \underbrace{\sum\limits_{k=1}^{n}\frac{b^2}{k^2} \E[\varepsilon_{k-1}^2] (\mathcal{P}_{k+1:n})^2}_\text{I} + \\ &\qquad 3\underbrace{\sum \limits_{k\neq l}^{n}b_kb_l\sqrt{\E[\varepsilon_{l-1}^2]\E[\varepsilon_{k-1}^2]}\mathcal{P}_{k+1:n}\mathcal{P}_{l+1:n}}_\text{II},\stepcounter{equation}\tag{\theequation}\label{eq:p67}
	\end{align*}
where the term (II) in the final inequality is obtained using Cauchy-Schwartz inequality.

We bound $\mathcal{P}_{i:j}^2$ as follows:
		\begin{align*}
			\mathcal{P}_{i:j}^2&=\prod\limits_{k=i}^{j}(1 - b_kM_{k-1})^2 = \prod\limits_{k=i}^{j}(1 + b_k^2M_{k-1}^2 - 2b_kM_{k-1}) \leq \exp{\sum_{k=i}^{j}(b_k^2M_{k-1}^2 - 2b_kM_{k-1})}\\ &\leq \exp\left(\sum_{k=i}^{j}(b_k^2L_4^2 - 2b_k\mu_2)\right) \leq \exp\left(\frac{b^2L_4^2\pi^2}{6}\right)e^{-\sum_{k=i}^{j}2b_k\mu_2} \leq \exp\left(\frac{b^2L_4^2\pi^2}{6}\right)\left(\frac{i}{j}\right)^{2\mu_2b}.
		\end{align*}
		We now bound term (I) using Lemma \ref{lem:srprime-consistent} as follows:
		\begin{align*}
			I = \sum\limits_{k=1}^{n}\frac{b^2}{k^2} \E[\varepsilon_{k-1}^2] (\mathcal{P}_{k+1:n})^2
			&\leq C_5\exp\left(\frac{b^2L_4^2\pi^2}{6}\right)\sum\limits_{k=1}^{n}\frac{b^2}{k^2} \left(\frac{k+1}{n}\right)^{2\mu_2b}\\ &\leq  \qquad \frac{C_5\exp\left(\frac{b^2L_4^2\pi^2}{6}\right)2^{2\mu_2b}b^2}{(2\mu_2b - 1)} \frac{1}{n}.\stepcounter{equation}\tag{\theequation}\label{eq:p81}
	\end{align*}
Next, using Lemma \ref{lem:srprime-consistent}, we bound the term (II) on the RHS of \eqref{eq:p67} as follows:
		\begin{align*}
			II &= \sum \limits_{k\neq l}^{n}b_kb_l\sqrt{\E[\varepsilon_{l-1}^2]\E[\varepsilon_{k-1}^2]}\mathcal{P}_{k+1:n}\mathcal{P}_{l+1:n}\\
			&\leq \left(\sqrt{\frac{C_5}{m}}\right)^2 \sum\limits_{k\neq l}^{n}b_kb_l\mathcal{P}_{k+1:n}\mathcal{P}_{l+1:n}\\
			&\leq  \frac{2C_5}{m} \sum\limits_{k>l} \frac{b^2}{kl}\prod\limits_{j=k+1}^{n}(1 - b_jM_{j-1})\prod\limits_{j=l+1}^{n}(1 - b_jM_{j-1})\\
			&=\frac{2C_5}{m}\sum\limits_{k>l}\frac{b^2}{kl}\prod\limits_{j=k+1}^{n}(1 - b_jM_{j-1})^2\prod\limits_{j=l+1}^{k}(1 - b_jM_{j-1}) \\
			&\leq \frac{2C_5}{m} \sum\limits_{l=1}^{n}\sum\limits_{k= l+1}^{n} \frac{b^2}{kl}\exp\left(\frac{b^2L_4^2\pi^2}{6}\right)\left(\frac{k+1}{n}\right)^{2\mu_2b}\left(\frac{l+1}{k}\right)^{\mu_2b}\\
			&\leq \frac{2C_5}{m}\exp\left(\frac{b^2L_4^2\pi^2}{6}\right)\sum\limits_{l=1}^{n} \frac{b^2}{l}\left(l+1\right)^{\mu_2b}\sum\limits_{k= l+1}^{n}\frac{1}{k^{\mu_2b+1}}\left(\frac{k+1}{n}\right)^{2\mu_2b}\\
			&\leq \frac{2C_5}{m}\exp\left(\frac{b^2L_4^2\pi^2}{6}\right)\frac{2^{3\mu_2b}}{n^{2\mu_2 b}}\sum\limits_{l=1}^{n} b^2l^{\mu_2b-1}  \sum\limits_{k= l+1}^{n}k^{\mu_2b-1} \\
			&\leq \frac{2C_5}{m}\exp\left(\frac{b^2L_4^2\pi^2}{6}\right)\frac{2^{3\mu_2b}}{n^{2\mu_2 b}}\sum\limits_{l=1}^{n} b^2l^{\mu_2b-1}  \frac{(n+1)^{\mu_2 b} - (l+1)^{\mu_2 b}}{\mu_2b}\\
			&\leq  \frac{2C_5}{m}\exp\left(\frac{b^2L_4^2\pi^2}{6}\right)\frac{2^{4\mu_2 b}}{\mu_2b}\frac{b^2}{n^{\mu_2b}}\sum\limits_{l=1}^{n} l^{\mu_2b-1}  \\
			&\leq \frac{2C_5b^2}{m}\exp\left(\frac{b^2L_4^2\pi^2}{6}\right) \frac{ 2^{4\mu_2 b}(n+1)^{\mu_2b}}{\mu_2^2b^2n^{\mu_2b}} \\[1ex]
			&\leq  \frac{2C_5b^2}{m} \exp\left(\frac{b^2L_4^2\pi^2}{6}\right)\frac{ 2^{5\mu_2 b}}{(\mu_2b)^2}.\stepcounter{equation}\tag{\theequation}\label{eq:p82}
	\end{align*}
	The main claim follows by substituting the bounds obtained in \eqref{eq:p81} and \eqref{eq:p82} in \eqref{eq:p67}.
\end{prf1}

\subsection{Proof of Theorem \ref{th:convexsgd}}
\label{sec:proofs-opt-convex}
\begin{prf1}
We state and prove three useful results in the following lemmas, which aid the proof of Theorem \ref{th:convexsgd}.
		
		\begin{lemma} \label{lem:2}
			Suppose \Crefrange{ass:1}{ass:2} hold for all $\theta \in \Theta$ and \Crefrange{ass:lossLipschitz}{ass:xiprimeBound} hold. Then for all $m \geq 1$, 
			\begin{align}
				\mathbb{E}[h'_m(\theta)^2] \leq \frac{C_5}{m} + \frac{2B_1C_4}{\sqrt{m}} + B_1^2.
			\end{align}
		\end{lemma}
		\begin{prf}
			The proof of this lemma follows directly from Lemma \ref{lem:srprime-consistent}. Let $h'(\theta) = \frac{dSR_{\lambda}(\theta)}{d\theta}$.
			Using the fact that $|x| - |y| \leq |x-y|$ for any $x,y \in \mathbb{R}$ followed by an application of Lemma \ref{lem:srprime-consistent}, we obtain
			\begin{align}
				\mathbb{E}[|h'_m(\theta)|]  &\leq \mathbb{E}[|h'_m(\theta) - h'(\theta)|] +\mathbb{E}[|h'(\theta)|]\leq \frac{C_4}{\sqrt{m}} + |h'(\theta)|. \label{eq:biasterm}
			\end{align}
			Using $(|x| - |y|)^2 \leq (|x-y|)^2$ for any $x,y \in \mathbb{R}$, we obtain
			\begin{align*}
				\mathbb{E}[h'_m(\theta)^2] & \leq \mathbb{E}[(h'_m(\theta) - h'(\theta))^2] + 2\mathbb{E}[|h'_m(\theta)|]|h'(\theta)| - h'(\theta)^2\\
				&\leq \frac{C_5}{m} + 2\left(\frac{C_4}{\sqrt{m}} + |h'(\theta)|\right)|h'(\theta)| - h'(\theta)^2 \\
				&=  \frac{C_5}{m} + 2\frac{C_4}{\sqrt{m}}|h'(\theta)| + h'(\theta)^2\leq \frac{C_5}{m} + \frac{2B_1C_4}{\sqrt{m}} + B_1^2,
			\end{align*}
			where the second inequality follows from Lemma \ref{lem:srprime-consistent} and \eqref{eq:biasterm}. The last inequality follows from \eqref{eq:gradientbound}.
		\end{prf}
		\begin{lemma}\label{lem:3}
			Suppose \Crefrange{ass:1}{ass:2} hold for all $\theta \in \Theta$ and \Crefrange{ass:lossLipschitz}{ass:xiprimeBound}, \ref{ass:compactset} and \ref{ass:convexity} hold. Suppose that the update in \eqref{eq:sgdconvex} is performed for $n$ steps with step-size sequence $\{b_k\}_{k=1}^n$. Then for any $1 < k_0 < k_1 \leq n$,
			\begin{align} \label{eq:lem9}
				\sum_{k=k_0}^{k_1}2b_k\mathbb{E}[h(\theta_k) - h(\theta_{k_0})] \leq \sum_{k=k_0}^{k_1}(2b_kD\mathcal{A}_k + b_k^2\mathcal{B}_k),
			\end{align}
			where $\mathcal{A}_k = \frac{C_4}{\sqrt{m_k}}$, $\mathcal{B}_k = \frac{C_5}{m_k} + 2B_1\mathcal{A}_k + B_1^2$.
		\end{lemma}
		\begin{prf}
			Let $\delta_k = h'_m(\theta_k) - h'(\theta_k)$ and $\zeta_k = |\theta_k - \theta_{k_0}|$. From \eqref{eq:sgdconvex}, we obtain
			\begin{align}
				\zeta_{k+1}^2 &= (\Pi_{\Theta}(\theta_k - b_k h_m'(\theta_k)) - \theta_{k_0})^2 \nonumber\\
				& \leq (\theta_k - b_k h_m'(\theta_k) - \theta_{k_0})^2 \label{eq:projectineq}\\
				& = \zeta_k^2 - 2b_k h_m'(\theta_k)(\theta_k - \theta_{k_0}) + b_k^2h_m'(\theta_k)^2 \nonumber\\
				& = \zeta_k^2 -2b_k(\delta_k + h'(\theta_k))(\theta_k - \theta_{k_0}) + b_k^2h_m'(\theta_k)^2 \nonumber\\
				& = \zeta_k^2 - 2b_k\delta_k(\theta_k - \theta_{k_0})- 2b_kh'(\theta_k)(\theta_k - \theta_{k_0}) + b_k^2h_m'(\theta_k)^2. \label{eq:sq1}
			\end{align}
			The inequality in \eqref{eq:projectineq} holds because $\theta_{k_0}$ belongs to the set $\Theta$, and the operator $\Pi_{\Theta}$ is non-expansive.
			
			Taking expectation on both sides of \eqref{eq:sq1}, and using Lemma \ref{lem:2}, we obtain
			\begin{align*}
				\mathbb{E}[\zeta_{k+1}^2] &\leq \mathbb{E}[\zeta_k^2] - 2b_k\mathbb{E}[h'(\theta_k)(\theta_k - \theta_{k_0})] - 2b_k\mathbb{E}[\delta_k(\theta_k - \theta_{k_0})] + b_k^2\left[\frac{C_5}{m_k} + \frac{2B_1C_4}{\sqrt{m_k}} + B_1^2\right]\\
				&\leq \mathbb{E}[\zeta_k^2] - 2b_k\mathbb{E}[h'(\theta_k)(\theta_k - \theta_{k_0})] + 2b_k\frac{C_4}{\sqrt{m_k}}|\theta_k - \theta_{k_0}| + b_k^2\left[\frac{C_5}{m_k} + \frac{2B_1C_4}{\sqrt{m_k}} + B_1^2\right]\\
				& = \mathbb{E}[\zeta_k^2] - 2b_k\mathbb{E}[h'(\theta_k)(\theta_k - \theta_{k_0})] + 2b_k\mathcal{A}_k\zeta_k+ b_k^2\left[\frac{C_5}{m_k} + 2B_1\mathcal{A}_k + B_1^2\right]\\
				& \leq \mathbb{E}[\zeta_k^2] - 2b_k\mathbb{E}[h(\theta_k) - h(\theta_{k_0})]+ 2b_k\mathcal{A}_k\zeta_k+ b_k^2\mathcal{B}_k,\stepcounter{equation}\tag{\theequation}\label{eq:p79}
			\end{align*}
			where the second inequality follows from Lemma \ref{lem:srprime-consistent}, while the last inequality follows from \Cref{ass:convexity}.
			Rearranging the terms in \eqref{eq:p79}, we obtain
			\begin{align*}
				2b_k\mathbb{E}[h(\theta_k) - h(\theta_{k_0})] \leq \mathbb{E}[\zeta_k^2] - \mathbb{E}[\zeta_{k+1}^2] + 2b_k\mathcal{A}_k\zeta_k+ b_k^2\mathcal{B}_k.
			\end{align*}
			Summing over $k=k_0$ to $k_1$ and using \Cref{ass:compactset} to bound $\zeta_k$ with $D$, we get \eqref{eq:lem9}.
		\end{prf}
		\begin{lemma} \label{lem:4}
			Suppose \Crefrange{ass:1}{ass:2} hold for all $\theta \in \Theta$ and \Crefrange{ass:lossLipschitz}{ass:xiprimeBound}, \ref{ass:convexity} and \ref{ass:compactset} hold. Then, with $b_k = b$ and $m_k = m$, $\forall k\geq 1$,
			\begin{align} \label{eq:lem10}
				\sum_{k=1}^n \mathbb{E}[h(\theta_k) - h(\theta^*)] \leq \frac{D^2}{2b} + 2nD\mathcal{A} + \frac{nbB_1^2}{2},
			\end{align}
			where $\mathcal{A} = \frac{C_4}{\sqrt{m}}$.
		\end{lemma}
		\begin{prf}
			Let $\delta_k = h'_m(\theta_k) - h'(\theta_k)$ and $\rho_{k+1} = \theta_k - b_k(h'(\theta_k)+\delta_k)$. Using convexity of $h(\theta)$, we obtain
			\begin{align}
				h(\theta_k) - h(\theta^*) &\leq h'(\theta_k)(\theta_k - \theta^*) 
				= \left(\frac{\theta_k - \rho_{k+1}}{b_k} - \delta_k\right)(\theta_k - \theta^*) \nonumber \\
				& = \frac{1}{b_k}(\theta_k - \rho_{k+1} - b_k\delta_k)(\theta_k - \theta^*) \nonumber \\
				& = \frac{1}{2b_k}\left((\theta_k - \theta^*)^2 + (\theta_k - \rho_{k+1} - b_k\delta_k)^2 - (\rho_{k+1} - \theta^* + b_k\delta_k)^2\right) \label{eq:lem10_2}\\
				& = \frac{1}{2b_k}\left((\theta_k - \theta^*)^2 - (\rho_{k+1}-\theta^* + b_k\delta_k)^2\right) + \frac{b_k}{2}h'(\theta_k)^2, \nonumber
			\end{align}
			where the equality in \eqref{eq:lem10_2} is obtained using $x  y = \frac{1}{2}(x^2+y^2-(x-y)^2)$. 
			
			Using $h'(\theta_k) \leq B_1$ from \eqref{eq:gradientbound}, we obtain
			\begin{align*}
				h(\theta_k) - h(\theta^*) & \leq \frac{1}{2b_k}\left((\theta_k - \theta^*)^2 - (\rho_{k+1}-\theta^*)^2 - b_k^2\delta_k^2 - 2b_k(\rho_{k+1}-\theta^*)\delta_k\right) + \frac{b_k}{2}B_1^2 \\
				& \leq \frac{1}{2b_k}\left((\theta_k - \theta^*)^2 - (\rho_{k+1}-\theta^*)^2 - 2b_k(\rho_{k+1}-\theta^*)\delta_k\right) + \frac{b_k}{2}B_1^2.
			\end{align*}
			Taking expectations, and using $(\rho_{k+1} - \theta^*)^2 \geq (\theta_{k+1}-\theta^*)^2$, we obtain
			\begin{align}
				\mathbb{E}[h(\theta_k) - h(\theta^*)] & \leq \frac{1}{2b_k}\left(\mathbb{E}[(\theta_k - \theta^*)^2] - \mathbb{E}[(\theta_{k+1} - \theta^*)^2] - 2b_k\mathbb{E}[|\theta_{k+1}-\theta^*||\delta_k|]\right)+ \frac{b_k}{2}B_1^2   \nonumber \\
				& \leq \frac{1}{2b_k}\left(\mathbb{E}[(\theta_k - \theta^*)^2] - \mathbb{E}[(\theta_{k+1} - \theta^*)^2] + 2b_k\mathcal{A}_k\mathbb{E}[|\theta_{k+1}-\theta^*|]\right) + \frac{b_k}{2}B_1^2.  \label{eq:lem10_3}
			\end{align}
			By summing \eqref{eq:lem10_3} over $k$, and using $b_k = b$ and $m_k = m$ along with the inequality $|\theta_k - \theta^*| \leq D,\ \forall k \geq 1$, we obtain \eqref{eq:lem10}.
		\end{prf}
		\subsection*{Proof of Theorem \ref{th:convexsgd}: }
		For $0 \leq i \leq p+1$, define $\nu_i$ as follows:
		\begin{align}
			\nu_i = \text{arg} \inf_{n_i < k \leq n_{i+1}}\mathbb{E}[h(\theta_k)],\ i \in [p+1],\ \text{and}\  \nu_0 = \text{arg} \inf_{\lceil\frac{n}{4}\rceil < k \leq n_1}\mathbb{E}[h(\theta_k)].
		\end{align}
		The horizon $n$ is split into $p$ phases with each phase having a constant step-size and batch-size. We need to show that the final iterate $\theta_n$ is close to an optimal $\theta^*$. Using $\nu_{p+1} = n$, we obtain
		\begin{align} \label{eq:57}
			\mathbb{E}[h(\theta_n)] = \mathbb{E}[h(\theta_{\nu_0})] + \sum_{i=0}^{p}\mathbb{E}[h(\theta_{\nu_{i+1}}) - h(\theta_{\nu_i})].
		\end{align}
		In order to bound $\mathbb{E}[h(\theta_{\nu_{i+1}}) - h(\theta_{\nu_i})]$, consider the case when $i \geq 1$. Using Lemma \ref{lem:3} with $k_0=\nu_i$ and $k_1 = n_{i+2}$, we obtain
		\begin{align}
			\frac{\sum_{k=\nu_i}^{n_{i+2}}2b_k\mathbb{E}[h(\theta_k)-h(\theta_{\nu_i})]}{n_{i+2}-\nu_i+1} & \leq \frac{\sum_{k=\nu_i}^{n_{i+2}}(2b_kD\mathcal{A}_k + b_k^2\mathcal{B}_k)}{n_{i+2}-\nu_i+1} \nonumber\\[1ex]
			& \leq 2b_{n_i+1}D\mathcal{A}_{n_{i}+1} + b_{n_i+1}^2\mathcal{B}_{n_i+1}, \label{eq:58}
		\end{align}
		where the inequality in \eqref{eq:58} follows from the fact that $b_k$ is a non-increasing sequence and $m_k$ is a non decreasing sequence resulting in $\mathcal{A}_k$ and $\mathcal{B}_k$ being non increasing sequences as well. Also note that $\nu_i \geq n_{i}+1$. Now we define the step-size $b_k$ and the batch size $m_k$ as some polynomial function of $n$ as follows:
		\begin{align}
			b_k = \frac{b_02^{-i}}{n^{\alpha_1}},\ \text{and}\ m_k = 2^i n^{\alpha_2},
		\end{align}
		for some positive constants $b_0$, $\alpha_1$ and $\alpha_2$ when $n_i < k \leq n_{i+1}$, $0 \leq i \leq p$. Substituting $b_k$ and $m_k$ in \eqref{eq:58}, we get
		\begin{align} \label{eq:60}
			\frac{\sum_{k=\nu_i}^{n_{i+2}}2b_k\mathbb{E}[h(\theta_k)-h(\theta_{\nu_i})]}{n_{i+2}-\nu_i+1} & \leq \frac{2DC_4b_02^{-3i/2}}{n^{\alpha_1 + \alpha_2/2}} + \frac{b_0^22^{-2i}}{n^{2\alpha_1}}\left[\frac{C_5}{2^in^{\alpha_2}} + \frac{2B_1C_4}{2^{i/2}n^{\alpha_2/2}} + B_1^2\right].
		\end{align}
		Next, we derive a lower bound for the expression on the left hand side of \eqref{eq:60}. Using $\mathbb{E}[h(\theta_k) - h(\theta_{\nu_i})] \geq 0$ whenever $n_i < k \leq n_{i+1}$, we obtain
		\begin{align}
			\frac{\sum_{k=\nu_i}^{n_{i+2}}2b_k\mathbb{E}[h(\theta_k)-h(\theta_{\nu_i})]}{n_{i+2}-\nu_i+1} & \geq \frac{\sum_{k=n_{i+1}+1}^{n_{i+2}}2b_k\mathbb{E}[h(\theta_k)-h(\theta_{\nu_i})]}{n_{i+2}-\nu_i+1} \nonumber\\
			& \geq 2b_{n_{i+2}} \frac{n_{i+2}-n_{i+1}}{n_{i+2}-n_i}\mathbb{E}[h(\theta_{\nu_{i+1}})-h(\theta_{\nu_i})] \nonumber\\
			& \geq \frac{2b_{n_{i+2}}}{5}\mathbb{E}[h(\theta_{\nu_{i+1}})-h(\theta_{\nu_i})] \nonumber\\
			& = \frac{2^{-i}b_0}{5n^{\alpha_1}}\mathbb{E}[h(\theta_{\nu_{i+1}})-h(\theta_{\nu_i})], \label{eq:61}
		\end{align}
		where the second inequality follows from the assumption $\mathbb{E}[h(\theta_{\nu_{i+1}}) - h(\theta_{\nu_i})] \geq 0$, and the fact that $n_{i+2} - n_{i+1} \geq n_{i+2}-\nu_i+1$. The last inequality follows from Lemma 4 of \cite{bhavsar2021nonasymptotic}. Combining the inequalities in \eqref{eq:60} and \eqref{eq:61}, we obtain
		\begin{align} \label{eq:62}
			\mathbb{E}[h(\theta_{\nu_{i+1}})-h(\theta_{\nu_i})] \leq \frac{10DC_42^{-i/2}}{n^{\alpha_2/2}} + \frac{5b_02^{-i}}{n^{\alpha_1}}\left[\frac{C_5}{2^1n^{\alpha_2}} + \frac{2B_1C_4}{2^{i/2}n^{\alpha_2/2}} + B_1^2\right].
		\end{align}
		The proof for the case when $i=0$ is similar to the above. Using \eqref{eq:62} in \eqref{eq:57}, we obtain
		\begin{align}
			\mathbb{E}[h(\theta_n)] &\leq \mathbb{E}[h(\theta_{\nu_0})] + \sum_{i=0}^p \left(\frac{10DC_42^{-i/2}}{n^{\alpha_2/2}} + \frac{5b_02^{-i}}{n^{\alpha_1}}\left[\frac{C_5}{2^in^{\alpha_2}} + \frac{2B_1C_4}{2^{i/2}n^{\alpha_2/2}} + B_1^2\right]\right) \nonumber \\
			&\leq \mathbb{E}[h(\theta_{\nu_0})] + \sum_{i=0}^{\infty} \left(\frac{10DC_42^{-i/2}}{n^{\alpha_2/2}} + \frac{5b_02^{-i}}{n^{\alpha_1}}\left[\frac{C_5}{2^in^{\alpha_2}} + \frac{2B_1C_4}{2^{i/2}n^{\alpha_2/2}} + B_1^2\right]\right) \nonumber\\
			&= \mathbb{E}[h(\theta_{\nu_0})] + \frac{10DC_4}{n^{\alpha_2/2}(1 - 1/\sqrt{2})} + \frac{5C_5b_0}{3n^{\alpha_1+\alpha_2}(1-1/4)} + \frac{10B_1C_4b_0}{n^{\alpha_1+\alpha_2/2}(1-2^{-3/2})} + \frac{5B_1^2b_0}{n^{\alpha_1}(1-1/2)}\nonumber\\
			&\leq \inf_{\lceil\frac{n}{4}\rceil\leq k\leq n_1}\mathbb{E}[h(\theta_k)] + \frac{35DC_4}{n^{\alpha_2/2}} + \frac{20C_5b_0}{3n^{\alpha_1+\alpha_2}} + \frac{16B_1C_4b_0}{n^{\alpha_1+\alpha_2/2}} + \frac{10B_1^2b_0}{n^{\alpha_1}}. \label{eq:63}
		\end{align}
		For $k \leq n_1$, $b_k = \frac{b_0}{n^{\alpha_1}}$ and $m_k = n^{\alpha_2}$. Using the fact that infimum is smaller than the weighted average, we obtain
		\begin{align}
			\inf_{\lceil\frac{n}{4}\rceil\leq k\leq n_1}\mathbb{E}[h(\theta_k)-h(\theta^*)] & \leq \frac{1}{n_1 - \lceil\frac{n}{4}\rceil+1}\sum_{k=\lceil\frac{n}{4}\rceil}^{n_1}\mathbb{E}[h(\theta_k)-h(\theta^*)] \nonumber \\
			& \leq \frac{2}{n_1}\sum_{k=1}^{n_1}\mathbb{E}[h(\theta_k)-h(\theta^*)] \label{eq:64} \\
			&\leq \frac{2}{n_1}\left[\frac{D^2n^{\alpha_1}}{2b_0} + \frac{2n_1DC_4}{n^{\alpha_2/2}} + \frac{n_1b_0B_1^2}{2}\right] \label{eq:65}\\
			& \leq \frac{4D^2}{b_0n^{1-\alpha_1}} + \frac{4DC_4}{n^{\alpha_2/2}} + \frac{b_0B_1^2}{n^{\alpha_1}}, \label{eq:66}
		\end{align}
		where \eqref{eq:64} follows from $n_1 \leq 2(n_1 - \lceil\frac{n}{4}\rceil + 1)$, \eqref{eq:65} follows from Lemma \ref{lem:4} and \eqref{eq:66} follows from the fact that $n_1 \geq \frac{n}{4}$. Using \eqref{eq:66} in \eqref{eq:63}, we obtain the following:
		\begin{align}
			\mathbb{E}[h(\theta_n) - h(\theta^*)] \leq  \frac{4D^2}{b_0n^{1-\alpha_1}} + \frac{39DC_4}{n^{\alpha_2/2}} + \frac{20C_5b_0}{3n^{\alpha_1+\alpha_2}} + \frac{16B_1C_4b_0}{n^{\alpha_1+\alpha_2/2}} +\frac{(11B_1^2)b_0}{n^{\alpha_1}}. \label{eq:67}
		\end{align}
		The values for $\alpha_1$ and $\alpha_2$ which will result in the tightest bound are $1/2$ and $1$ respectively. Substituting these values, we get the main claim of Theorem \ref{th:convexsgd}.
\end{prf1}

\section{Concluding Remarks and Future Work}
\label{sec:conclusions}
We considered the problem of estimating Utility Based Shortfall Risk (UBSR) in an online setting, when samples from the underlying loss distribution are available one sample at a time. We cast the UBSR estimation problem as a stochastic approximation based root finding scheme. We derived non-asymptotic convergence guarantees on the mean-squared error of our UBSR estimator for different step sizes. We also derived high probability bounds for the concentration of the estimation error. 

Finally we considered the UBSR optimization problem, when the loss distribution belongs to a parameterized family. We proposed a stochastic gradient descent scheme, and derived non-asymptotic convergence guarantees under finite second moments. We faced the challenge of working with biased gradient estimates, which we addressed using batching. More broadly, the techniques developed in this work are applicable  in a variety of settings, to characterize the finite sample performance of stochastic approximation and SGD algorithms.

We list a few interesting directions of future research.
First, it would be interesting to explore UBSR optimization in a risk-sensitive reinforcement learning setting. 
Second, our contributions in the context of UBSR optimization can be extended to a vector parameter setting. For this purpose, one could either extend the UBSR derivative expression to cover a vector parameter and subsequently, devise a sample average approximation to UBSR gradient. Alternatively, one could use a gradient estimation scheme based on finite differences, and the simultaneous perturbation method.
Third, it remains open to extend our finite sample bounds to the estimator of multivariate shortfall risk measure (MSRM) \cite{armenti2018}. In a recent work \cite{kaakai2022UBSR}, the authors derive asymptotic consistency and normality results for an estimator of MSRM, while finite sample bounds are not available. 
Fourth, it would be interesting to study robust variants of UBSR, in the spirit of \cite{bartl2020} for OCE risk and estimation/optimization schemes thereof.
Fifth, it would be interesting to derive analytical results for a generalization of the portfolio management example (solved in the Markowitz-sense) considered in Section \ref{sec:ubsr-opt-portfolio}, i.e., minimize the UBSR value while ensuring a minimum expected return. Variations of this problem have been considered in \cite{gundel2008utility,zhaolin2016convex}, but these works do not fall under the realm of online learning, where the underlying model information is not known explicitly. We think two timescale stochastic approximation \cite[Chapter 6]{borkar2008stochastic} could be a promising approach to solve the aforementioned constrained problem using sample data.
Finally, it would be challenging to consider the UBSR estimation and optimization in a setting with Markovian samples.

\bibliographystyle{MOR_rev/informs2014} 
\bibliography{MOR_rev/ref_new}


\end{document}